\documentclass{article}

\usepackage{microtype}
\usepackage{graphicx}
\usepackage{subcaption}
\usepackage{placeins} 

\usepackage{booktabs} 

\usepackage{hyperref}

\usepackage[preprint]{icml2026}

\usepackage{amsmath}
\usepackage{amssymb}
\usepackage{mathtools}
\usepackage{amsthm}
\usepackage{nccmath}
\usepackage{booktabs}
\usepackage{multirow}
\usepackage{thm-restate}

\usepackage[capitalize,noabbrev]{cleveref}

\theoremstyle{plain}
\newtheorem{theorem}{Theorem}[section]

\newtheorem{lemma}[theorem]{Lemma}
\newtheorem{corollary}[theorem]{Corollary}
\theoremstyle{definition}
\newtheorem{definition}[theorem]{Definition}
\newtheorem{assumption}[theorem]{Assumption}
\theoremstyle{remark}

\usepackage[textsize=tiny]{todonotes}

\icmltitlerunning{NonZero: Nonlinear Search for Multi-Agent RL}

\begin{document}

\twocolumn[
  \icmltitle{NonZero: Interaction-Guided Exploration for Multi-Agent Monte Carlo Tree Search}



  \icmlsetsymbol{equal}{*}

  \begin{icmlauthorlist}
    \icmlauthor{Sizhe Tang}{gwu}
    \icmlauthor{Zuyuan Zhang}{gwu}
    \icmlauthor{Mahdi Imani}{neu}
    \icmlauthor{Tian Lan}{gwu}
  \end{icmlauthorlist}

  \icmlaffiliation{gwu}{The George Washington University}
  \icmlaffiliation{neu}{Northeastern University}

  \icmlcorrespondingauthor{Sizhe Tang}{s.tang1@gwu.edu}

  \icmlkeywords{Machine Learning, ICML}

  \vskip 0.3in
]



\printAffiliationsAndNotice{}  

\begin{abstract}
Monte Carlo Tree Search (MCTS) scales poorly in cooperative multi-agent domains because expansion must consider an exponentially large set of joint actions, severely limiting exploration under realistic search budgets. We propose \textsc{NonZero}, which keeps multi-agent MCTS tractable by running surrogate-guided selection over a low-dimensional nonlinear representation using an interaction-guided proposal rule, instead of directly exploring the full joint-action space. Our exploration uses an interaction score: single-agent deviations are ranked by predicted gain, while two-agent deviations are scored by a mixed-difference measure that reveals coordination benefits even when no single agent can improve alone. We formalize candidate proposal as a bandit problem over local deviations and derive a proposal rule, \textsc{NonUCT}, with a sublinear local-regret guarantee for reaching approximate graph-local optima without enumerating the joint-action space. Empirically, \textsc{NonZero} improves sample efficiency and final performance on MatGame, SMAC, and SMACv2 relative to strong model-based and model-free baselines under matched search budgets.
\end{abstract}

\section{Introduction}
Monte Carlo Tree Search (MCTS) is a widely used planning method, with successes in game playing~\cite{MCTSgame_playing,alphazero,muzero}, robotics~\cite{MCTS_robotics}, and combinatorial optimization~\cite{MCTS_optimization}.
Its effectiveness in standard MCTS stems from Upper Confidence Bound for Trees (UCT) selection, which concentrates computation on promising parts of the search tree while still exploring uncertain actions through confidence bonuses~\cite{uct}.
When combined with learned predictors, as in AlphaZero \cite{alphazero} and MuZero \cite{muzero}, MCTS can be applied effectively in large state spaces by refining decisions online under a fixed simulation budget.

Scaling MCTS to \emph{cooperative multi-agent} planning is challenging because action selection is combinatorial.  With $n$ agents and $d$ actions per agent, the joint-action set has size $|\mathcal{A}|=d^n$, so naive expansion induces an exponential branching factor and quickly exhausts practical simulation budgets~\cite{maddpg,marl_sruvey,kwak2024efficient,MAZero}. 
This bottleneck is especially challenging when returns exhibit strong interaction effects, where high-value outcomes may require coordinated deviations that are unlikely to be found by uninformed sampling.

Prior work alleviates this burden only partially by addressing different aspects of the problem. MAZero~\cite{MAZero} improves model learning and distributed planning components, but tree expansion still fundamentally depends on which joint actions are selected at each node. MALinZero~\cite{tang2025malinzero} reduces joint-action search by exploiting linear return structure, enabling efficient exploration when improvements arise from independent single-agent changes; however, this assumption can miss coordinated improvements when returns are non-additive. Related value factorization approaches, such as VDN \cite{sunehag2017value} and QMIX~\cite{QMIXmixmab}, similarly impose structural constraints on the joint value, but they are not designed to support the uncertainty-aware action expansion required by tree search.

We propose \textsc{NonZero}, a multi-agent MCTS framework that keeps search tractable without enumerating the $|\mathcal{A}|=d^n$ joint-action space.
At each node, \textsc{NonZero} runs surrogate-guided selection over a low-dimensional nonlinear representation using an interaction-guided proposal rule. To drive candidate expansion, \textsc{NonZero} fits a compact low-dimensional nonlinear predictor of joint-action returns and evaluates structured one- and two-agent deviations from the current candidates. Single-agent deviations are prioritized by predicted gain, while coordinated two-agent deviations are scored using a mixed-difference interaction measure that targets coordination benefits even when no single agent can improve alone (coordination traps), enabling targeted coordination search while keeping the per-node branching factor fixed.

We formalize this candidate-expansion step as a nonlinear bandit problem over local deviations and derive \textsc{NonUCT}, an optimistic proposal rule for allocating deviation queries during exploration.

Under discrete smoothness assumptions on the return surrogate, we obtain a sublinear local-regret guarantee for reaching approximate graph-local optima with complexity that does not scale with $|\mathcal{A}|=d^n$. Empirically, integrating \textsc{NonZero} into MuZero-style planning yields consistent gains in both sample efficiency and final performance on MatGame, SMAC~\cite{samvelyan19smac}, and SMACv2~\cite{ellis2023smacv2} relative to strong model-based and model-free baselines under matched search budgets.

Our main contributions are:
(i) \textsc{NonZero}, a candidate-set multi-agent MCTS framework that overcomes the combinatorial joint-action bottleneck through interaction-guided candidate expansion based on a compact nonlinear return surrogate.
(ii) \textsc{NonUCT}, a principled proposal mechanism derived from a nonlinear bandit formulation over local deviations, enabling coordinated exploration without enumerating the $d^n$ joint-action space.
(iii) A sublinear local-regret analysis under graph-local optimality, together with consistent empirical gains on MatGame, SMAC, and SMACv2 under matched search budgets.

\section{Related Works} \label{bg}
We formalize the cooperative multi-agent planning task as a Decentralized Partially Observable Markov Decision Process (Dec-POMDP)~\citep{oliehoek2016concise, zhang2025learning, jiang2025agentic, mei2023mac}, represented by the tuple $\mathcal{M} = \langle \mathcal{I}, \mathcal{S}, \mathcal{A}, P, R, \Omega, \mathcal{O}, \gamma \rangle$.
Here, $\mathcal{I} = \{1, \dots, n\}$ denotes the set of $n$ agents operating within a global state space $\mathcal{S}$. 
At each discrete time step $t$, every agent $i \in \mathcal{I}$ receives a local observation $o_t^i \in \Omega_i$ and subsequently selects a local action $a_t^i$ from its individual action set $\mathcal{A}_i$ based on its action-observation history.
These individual choices constitute a \textit{joint action} vector $\mathbf{a}_t = (a_t^1, \dots, a_t^n)$, which resides in the joint action space $\mathcal{A} \equiv \prod_{i=1}^n \mathcal{A}_i$.
Upon the execution of $\mathbf{a}_t$, the environment transitions to a new state $s_{t+1} \sim P(\cdot | s_t, \mathbf{a}_t)$ and yields a shared global reward $r_t = R(s_t, \mathbf{a}_t)$.
The collective objective of the agents is to identify a joint policy $\boldsymbol{\pi}$ that maximizes the expected cumulative discounted return, formally defined as $J(\boldsymbol{\pi}) = \mathbb{E}_{\boldsymbol{\pi}}[\sum_{t=0}^{\infty} \gamma^t r_t]$, where $\gamma \in [0, 1)$ serves as the discount factor.

\paragraph{MARL with factorized representations.} 
Factorization-based methods are pivotal for mitigating the curse of dimensionality in the joint action space \cite{zhang2026cochain, zhou2022pac}. 
Under the Centralized Training with Decentralized Execution (CTDE) paradigm, VDN~\cite{sunehag2017value} approximates the global value function $Q_{\rm tot}$ via a linear sum of local utilities. 
This approach has been extended to monotonic nonlinear combinations in QMIX~\cite{QMIXmixmab} and nearly-decomposable structures in NDQ~\cite{wang2019learning}. 
Further advancements include policy-based factorizations (DOP~\cite{DOP}, FOP~\cite{FOP}) and value-correction mechanisms (QTRAN~\cite{qtran}, WQMIX~\cite{rashid2020weighted}) to address representational limitations. However, their structural assumptions (e.g., additivity or monotonicity) are insufficient to capture complex interactions inherent in sophisticated multi-agent tasks.

\paragraph{MCTS-based Planning.}
MCTS has been established as a powerful paradigm for sequential decision-making in complex planning horizons~\cite{mcts, alphazero, fang2026mint, li2026reason, li2026acdzero, tang2026agent}. 
Instead of relying on exhaustive search, MCTS approximates optimal values through iterative simulation, typically involving four stages: \textit{Selection}, \textit{Expansion}, \textit{Simulation}, and \textit{Back-Propagation}. To eliminate the dependency on ground-truth simulators, model-based variants like MuZero~\cite{muzero} introduce learnable components: a representation model $h_\theta$, a dynamics model $g_\theta$, and a prediction model $f_\theta$.  Given an observation history, MuZero maps it to a latent state via $h_\theta$, and performs recursive predictions using $g_\theta$ and $f_\theta$.
Crucially, during the \textit{Selection} phase, MuZero employs the Probabilistic Upper Confidence Tree (pUCT) rule to balance exploration and exploitation:
\begin{equation}
    a_t = \arg\max_{a \in \mathcal{A}} \left[ \Phi(s, a) + c(s) P(s, a) \frac{\sqrt{\sum_{b} N(s, b)}}{1 + N(s, a)} \right],
    \label{eq:standard_pUCT}
\end{equation}
where $\Phi(s, a)$ is the estimated Q-value, $P(s, a)$ is the prior policy probability, and $c(s)$ is a scaling factor. This makes evaluating and maintaining selection statistics over Eq.~\ref{eq:standard_pUCT} over the full joint-action space computationally intractable.

To address large action spaces, Sampled MuZero~\cite{sampled_muzero} modifies the algorithm to only search over a randomly sampled subset of actions $T(s) \subset \mathcal{A}$, drawn from a proposal policy $\beta$. The selection rule is adjusted to \(a_t = \arg\max_{a \in T(s)} \left[ \Phi(s, a) + c(s) \frac{\hat{\beta}(a)}{\beta(a)} P(s, a) \frac{\sqrt{\sum_{b \in T(s)} N(s, b)}}{1 + N(s, a)} \right]\) with an importance sampling correction.
Although this sampling strategy (and similar distributed approaches like MAZero~\cite{MAZero}) makes execution feasible, it fundamentally relies on the quality of the proposal distribution $\beta$. In the vast combinatorial landscapes of multi-agent tasks, random or heuristic sampling is often inefficient, failing to cover sparse optimal joint actions. In contrast, \textsc{NonZero} constructs proposals using a learned nonlinear return surrogate and interaction-guided local deviations, targeting coordinated joint actions without relying on uninformed sampling.

\section{\textsc{NonZero} for Multi-Agents MCTS}
\label{method}
\textsc{NonZero} leverages low-dimensional representations of the joint-action returns and solves the resulting nonlinear bandit problem to enable efficient and effective \textsc{NonUCT}-based MCTS in complex multi-agent planning. \textsc{NonUCT} is applied as a proposal and learning mechanism that guides candidate expansion and local surrogate refinement during MCTS. \textsc{NonZero} follows MuZero with (i) representation, (ii) dynamics, and (iii) prediction heads, and adds (iv) a mixing/hypernetwork that initializes the node-wise surrogate parameter $\theta$ used by \textsc{NonUCT}. To achieve more accurate and efficient reward approximation, we propose \textsc{NonUCT} to capture the non-linearity of low-dimensional rewards and analyze its regret. All proofs are collected in the Appendix \ref{appendix_A}. Here, “low-dimensional” refers to the fact that exploration and estimation depend on local deviations in $\mathcal{A}$ rather than on enumerating the full joint-action set $|\mathcal{A}|=d^n$.

\subsection{Leveraging Low-Dimensional Nonlinear Representations}
\textsc{NonZero} models the joint-action returns through a low-dimensional nonlinear combination of the latent per-agent action rewards, based on which an online learner optimizes to capture the representation. Precisely, we consider a deterministic nonlinear bandit problem with an unknown parameter \(\theta^* \in \Theta\) and a discrete joint-action set $\mathcal{A}\subset\{0,1\}^{nd}$ consisting of $n$-hot vectors (one-hot per agent), where \(n\) denotes the number of agents and \(d\) is the size of action set for each agent. Thus, each joint action \(a \in \mathcal{A}\) is represented by an \(n\)-hot vector selecting one local action for each agent.  Given the action \(a\), the environment reveals a deterministic reward \(r_t=\eta(\theta^*, a)\).

{

We define first- and second-order \emph{finite-difference} operators on the joint-action adjacency graph.
Let $a=(a_1,\dots,a_n)\in\mathcal{A}$ and define the one-agent neighbor
$a^{(i\leftarrow j)}$ as the joint action obtained by changing only agent $i$'s action to $j\in\mathcal{A}_i$. Let the neighbor set be $\mathcal{N}(a)=\{a^{(i\leftarrow j)}: i\in[n],\, j\in\mathcal{A}_i,\, j\neq a_i\}$. We denote a discrete direction by $u=(i\leftarrow j)$ and write $a^{(u)}:=a^{(i\leftarrow j)}$. The first-order directional difference is
\[
\Delta_u \eta(\theta,a) := \eta(\theta,a^{(u)})-\eta(\theta,a).
\]
For two directions $u=(i\leftarrow j)$ and $v=(k\leftarrow \ell)$ with $i\neq k$, define the mixed second-order difference
\[
\Delta^2_{u,v}\eta(\theta,a) := \eta(\theta,a^{(u,v)})-\eta(\theta,a^{(u)})-\eta(\theta,a^{(v)})+\eta(\theta,a),
\]
where $a^{(u,v)}$ applies both one-agent changes.
The identity
$\eta(\theta,a^{(u,v)})-\eta(\theta,a)=\Delta_u\eta(\theta,a)+\Delta_v\eta(\theta,a)+\Delta^2_{u,v}\eta(\theta,a)$ makes $\Delta^2_{u,v}$ a discrete interaction/curvature signal for coordinated deviations.
}

In this work, we define a score function \(z(\theta,a)=\langle w(\theta), \psi (a)\rangle\) where \(w(\theta)\in\mathbb{R}^{nd}\) and \(\psi(a)\in\mathbb{R}^{nd}\). We adopt the asinh link \(g(z) =c \operatorname{asinh}(\alpha z)\) with fixed scale parameters \(c > 0\) and \(\alpha > 0\), and define the estimated deterministic reward \(\eta(\theta, a)=g(z(\theta, a))\). Thus, for each \(n\)-hot action vector \(a\in\mathcal{A}\), we model the low-dimensional reward \(\eta(\theta, a)\) as a nonlinear combination of \(n\) corresponding per-agent action rewards. This procedure can be viewed as projecting the actual reward \(r_t\) into the low-dimensional space representable with \(\eta(\theta^*,a)=c \operatorname{asinh}(\alpha \langle w(\theta^*),\psi(a) \rangle)\). It reduces MCTS from considering \(d^n\) joint reward values in each state to learning a structured parameter vector whose estimation complexity is independent of the exponential joint-action count $|\mathcal{A}|=d^n$, thereby allowing quick estimation of the global reward structure from limited samples and speeding up tree search convergence in multi-agent MCTS. Using the learned return proxy $\eta(\hat\theta_t,a)$, we select an action

{

\begin{equation}\label{eq:ideal_argmax}
a_t = \arg\max_{a\in\mathcal{A}} \eta(\hat\theta_t,a).
\end{equation}

Directly solving \eqref{eq:ideal_argmax} over the full joint-action set $\mathcal{A}$ is intractable when $|\mathcal{A}|=d^n$. Instead, at each tree node $s$ we maintain a small candidate set $\mathcal{C}(s)\subset \mathcal{A}$ of expanded joint actions. MCTS selection compares only actions in $\mathcal{C}(s)$ using the learned surrogate score $\eta(\hat\theta_s,\cdot)$.

\textsc{NonUCT} is used only as a {proposal} mechanism that adds a small number of new candidates to $\mathcal{C}(s)$ by applying one-agent or two-agent deviations predicted to yield the largest discrete improvement under $\eta(\hat\theta_s,\cdot)$. This keeps the per-node branching factor $|\mathcal{C}(s)|$ fixed while still allowing coordinated exploration.
}

However, the global convergence of the nonlinear bandit problem mentioned above is statistically intractable. Specifically, it requires exponential samples in the input dimension; for example, $\mathcal{O}(|\mathcal{A}|)=\mathcal{O}(d^n)$
globally optimizing a nonlinear reward model over $\mathcal{A}$ requires samples that scaling with the joint-action count $|\mathcal{A}|=d^n$.

Thus, we propose guiding the model to converge to a local maximum rather than directly seeking global optimization. Note that the loss function \(\tilde\eta(\theta,a)\) is not concave, but its discrete local-optimality structure admits efficient convergence guarantees under our smoothness assumptions (Assumption~\ref{ass:smoothness}), without requiring global maximization over $\mathcal{A}$. We do not assume global concavity; our guarantees are graph-local and hold under one-agent and two-agent deviations.

In general, globally optimizing a nonlinear reward model over the combinatorial set $\mathcal{A}$ is intractable \cite{dong2021provable}. Accordingly, our analysis targets efficient convergence to \emph{graph-local} optima, measured by gains in one-agent and two-agent deviations (Definition~\ref{definition_gain}). This is the relevant notion for scalable multi-agent planning, where coordinated improvements may require deviations across agents.

To achieve this, we need to augment the loss function so that our model satisfies \(\eta(\hat\theta_t,a_t)\approx\eta(\theta^*,a_t)\), \(\Delta_u\eta(\hat\theta,a_t)\approx\Delta_u\eta(\theta^*,a_t)\), and \(\Delta^2_{u,v}\eta(\hat\theta,a_t)\approx\Delta^2_{u,v}\eta(\theta^*,a_t)\). It is implied that the virtual reward \(\eta(\hat \theta_t,a_t)\) will keep improving until \(a_t\) is a local maxima of \(\eta(\theta^*,a_t)\) if we optimize the learning process via taking greedy actions.

Since the real environment provides only the realized reward for the executed joint action, we compute all counterfactual evaluations needed for discrete differences from the learned reward model at the node state (MuZero-style planning model):

\begin{equation}
\label{gradient}
\Delta_u \eta(\theta^\star,a) := \eta(\theta^\star,a^{(u)})-\eta(\theta^\star,a).
\end{equation}
For discrete curvature, we compute the mixed second-order difference:

\begin{equation}
\label{hessian}
\begin{split}
\Delta^2_{u,v}\eta(\theta^\star,a)
:= \eta(\theta^\star,a^{(u,v)})-\eta(\theta^\star,a^{(u)}) \\ -\eta(\theta^\star,a^{(v)})+\eta(\theta^\star,a). \quad\quad\quad
\end{split}
\end{equation}
where \(v\) is a random sampled direction independent of \(u\). Note that the number of projections above does not scale with the exponential branching factor $d^n$; it depends only on the statistical complexity of the learned surrogate class and the number of sampled deviation directions per update, since only the norm of the projections needs to be estimated. This action-dimension-free property ensures that the proposed \textsc{NonUCT} can achieve efficient exploration in the \(d^n\)-large action space.

$\Delta_u\eta(\theta,a)$ measures the gain from a single-agent deviation, while $\Delta^2_{u,v}\eta(\theta,a)$ measures the non-additive interaction (synergy/anti-synergy) of two coordinated deviations.
In cooperative MARL, local one-agent improvements may be negative even when a coordinated two-agent deviation is beneficial; $\Delta^2_{u,v}$ provides a principled signal for proposing such coordinated joint actions without enumerating $\mathcal{A}$.

Given $x=(a,u,v)$, define the prediction target
\begin{equation}
\label{haty}
\hat y(\theta,x)
=
\big[
\eta(\theta,a),
\eta(\theta,a^{(u)}),
\Delta_u \eta(\theta,a),
\Delta^2_{u,v}\eta(\theta,a)
\big]
\end{equation}

where \(a^\prime\) denotes the action perturbation used to produce the first and second order differences. The supervision is 

\begin{equation}
\label{y}
y(x) = \big[ \eta(\theta^\star,a), \eta(\theta^\star,a^{(u)}), \Delta_u \eta(\theta^\star,a), \Delta^2_{u,v}\eta(\theta^\star,a) \big]
\end{equation}
And we note that \(\hat y(\theta,x), y(x) \in \mathbb{R}^4.\)

In MCTS, we treat the learned reward head $r_\psi(s,\cdot)$ at a fixed node state $s$ as the \emph{planning-time} reward function. All counterfactual quantities (e.g., $r_\psi(s,a^{(u)})$ and $r_\psi(s,a^{(u,v)})$) are obtained by evaluating the learned model, without additional environment interaction. Accordingly, $\theta^\star$ denotes the best-in-class parameter (within our asinh-GLM family) that locally fits $r_\psi(s,\cdot)$ around the candidate actions explored at node $s$. Only the selected legal joint action is executed in the real environment for data collection.

The loss function that needs to be optimized is defined by 
\begin{equation}
\label{loss_nonuct_7}
\begin{split}
&\mathcal{L}_{\textsc{NonUCT}} := \min_\theta \; \mathbb{E}_{a,u,v}\Big[ \frac14\big( (\eta(\theta,a)-\eta(\theta^\star,a))^2 \\
&+(\eta(\theta,a^{(u)})-\eta(\theta^\star,a^{(u)}))^2 +(\Delta_u\eta(\theta,a)-\Delta_u\eta(\theta^\star,a))^2 \\
&+(\Delta^2_{u,v}\eta(\theta,a)-\Delta^2_{u,v}\eta(\theta^\star,a))^2 \big)\Big]. 
\end{split}
\end{equation}

\subsection{Analyzing Regret}
\label{sec:theory}

In this section, we derive the theoretical guarantees of \textsc{NonUCT}. Unlike standard UCT, which relies on global optimism (often intractable in high-dimensional spaces, i.e., requiring $\mathcal{O}(e^{nd})$ samples), \textsc{NonUCT} exploits the curvature information of the \(\operatorname{asinh}\)-activated reward landscape. Our analysis targets convergence to graph-local maximizers under one-agent and two-agent deviations, which is the appropriate notion of optimality for scalable multi-agent planning.

We quantify the performance via cumulative local regret, defined as the cumulative gap between the reward of the selected action $a_t$ and the reward of the nearest approximate local maximizer. We prove that \textsc{NonZero} achieves a sub-linear regret $\hat R_T=\tilde{O}(T^{3/4})$, ensuring efficient convergence. The regret is measured with respect to the first time an $(\varepsilon_1,\varepsilon_2)$-local maximizer is reached, counting only steps taken outside this set.

\paragraph{Geometric and Smoothness Assumptions.}
We first define the convergence target and the regularity conditions of the reward landscape.

\begin{definition}[Approximate local maximizer on the joint-action graph]
\label{definition_gain}
Let $\tilde\eta(\theta^\star,a)$ be the objective over the discrete set $\mathcal{A}$.
Define the one-agent deviation gain
$G_1(a):=\max_{u}\Delta_u \tilde\eta(\theta^\star,a)$
and the best two-agent coordinated deviation gain
$G_2(a):=\max_{u\neq v}\big(\tilde\eta(\theta^\star,a^{(u,v)})-\tilde\eta(\theta^\star,a)\big)$.
We say $a$ is an $(\varepsilon_1,\varepsilon_2)$-approximate local maximizer if
$G_1(a)\le \varepsilon_1$ and $G_2(a)\le \varepsilon_2$.
\end{definition}

{
The role of the first and second order differences in Eq. \ref{loss_nonuct_7} is to refine the node-specific parameter $\hat{\theta}_s$ so that the optimistic score $\eta(\hat{\theta}_s,a)$ used in Eq.~\ref{eq:ideal_argmax} is locally accurate around candidate actions, improving exploration efficiency without enumerating the full joint-action space.}

While finding global optima is generally NP-hard in non-convex settings, the invexity properties of the \(\operatorname{asinh}\)-GLM landscape imply that approximate local maxima are effectively global or optimism-equivalent~\cite{kalai2009isotron}. Thus, convergence to the state defined above serves as a robust proxy for optimal planning. The choice of the \(\operatorname{asinh}\) link function is pivotal to our theoretical framework. Unlike sigmoidal activations that suffer from vanishing gradients (saturation) or ReLU networks that lack higher-order smoothness, the \(\operatorname{asinh}\) function, $g(z) = c \cdot \operatorname{asinh}(\alpha z)$, is strictly increasing, unbounded, and infinitely differentiable. Crucially, its derivatives decay at a controlled polynomial rate, ensuring that the reward landscape is sufficiently smooth to allow curvature-based optimization. This structural property justifies the following smoothness assumptions, which are satisfied by the \(\operatorname{asinh}\)-GLM class by design.

\begin{assumption}[Discrete smoothness on $\mathcal{A}$]
\label{ass:smoothness}
There exist constants $\zeta_1,\zeta_2,\zeta_3>0$ such that for all $\theta\in\Theta$, all $a\in\mathcal{A}$,
and all feasible directions $u,v,w$ (with distinct agents when needed),
\begin{equation}
\begin{split}
&|\Delta_u \eta(\theta,a)|\le \zeta_1,\,
|\Delta^2_{u,v}\eta(\theta,a)|\le \zeta_2,\,\\
&|\Delta^2_{u,v}\eta(\theta,a^{(w)})-\Delta^2_{u,v}\eta(\theta,a)|\le \zeta_3.
\end{split}
\end{equation}
\end{assumption}

Assumption~\ref{ass:smoothness} formalizes the boundedness of discrete gains and interaction effects under single- and two-agent deviations, which holds naturally for bounded reward models.

We establish the regret bound by decomposing the regret into two parts: a geometric improvement term, which quantifies the guaranteed increase in reward towards an approximate local maximizer, and a statistical estimation term, which captures the prediction discrepancy controlled by the sequential Rademacher complexity. We denote the composite prediction error at step $t$ as a squared error over the four scalar supervision targets (value at $a$, value at a one-agent neighbor, one-agent directional difference, and two-agent mixed difference):

\begin{equation}
\label{loss_nonuct}
\begin{split}
   &\xi_t :=
\big(\eta(\theta_t,a_t)-\eta(\theta^\star,a_t)\big)^2\\
   &+\big(\eta(\theta_t,a_t^{(u_t)})-\eta(\theta^\star,a_t^{(u_t)})\big)^2\\
   &+\big(\Delta_{u_t}\eta(\theta_t,a_t)-\Delta_{u_t}\eta(\theta^\star,a_t)\big)^2\\
   &+\big(\Delta^2_{u_t,v_t}\eta(\theta_t,a_t)-\Delta^2_{u_t,v_t}\eta(\theta^\star,a_t)\big)^2.
\end{split}
\end{equation}
which encompasses the discrepancies in reward values and in first-/second-order discrete differences used for local improvement.

\begin{lemma}[One-step Geometric Improvement]
\label{lemma:geometric}
Let $f(a) = \tilde{\eta}(\theta^*, a)$. Provided that the current action is not yet an approximate local maximizer (i.e., $a_t \notin \mathcal{A}_{\epsilon, 6\sqrt{\zeta_{3rd}}\epsilon}$), the curvature-guided update yields a guaranteed reward ascent:
\begin{equation}
    f(a_{t+1}) - f(a_t) \ge \nu(\epsilon) - C_1 \cdot \xi_{t+1},
\end{equation}
where $\nu(\epsilon) \triangleq c_1 \min \left\{ \frac{\epsilon^2}{\zeta_h+1}, \frac{\epsilon^{3/2}}{\zeta_{3rd}^{1/2}} \right\}$ is the strictly positive improvement rate, $C_1 = 2 + \zeta_g/\zeta_h$ is a constant.
\end{lemma}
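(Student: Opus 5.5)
The plan is to mirror the standard "virtual ascent" argument for nonlinear bandits (in the style of Dong et al.), transplanted to the discrete joint-action graph. I read the update as choosing $a_{t+1}$ to maximize the learned surrogate $\eta(\theta_{t+1},\cdot)$ over the one- and two-agent neighbourhood of $a_t$ (plus $a_t$ itself). I also take $\zeta_g,\zeta_h,\zeta_{3rd}$ to be the constants $\zeta_1,\zeta_2,\zeta_3$ of Assumption~\ref{ass:smoothness}.

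\textbf{Step 1: decompose the true improvement.} I would write
\[
f(a_{t+1})-f(a_t)=\big[\eta(\theta_{t+1},a_{t+1})-\eta(\theta_{t+1},a_t)\big]+E_{t+1},
\]
where $E_{t+1}$ collects the discrepancies $\eta(\theta^\star,\cdot)-\eta(\theta_{t+1},\cdot)$ at $a_t$ and $a_{t+1}$.

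The model terms at $a_t$ and at the one-agent neighbour $a_t^{(u)}$ appear directly in $\xi_{t+1}$. A two-agent move is handled through the identity
\[
\eta(a^{(u,v)})-\eta(a)=\Delta_u\eta+\Delta_v\eta+\Delta^2_{u,v}\eta .
\]
The error in each term is then bounded by a component of $\xi_{t+1}$.

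\textbf{Step 2: lower-bound the virtual gain.} Because $a_{t+1}$ maximizes the surrogate, the bracket is at least the surrogate gain of any particular deviation. I would transfer this back to $f$ at the cost of a further error term.

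I would then use the hypothesis $a_t\notin\mathcal{A}_{\epsilon,6\sqrt{\zeta_{3rd}}\epsilon}$ and split into two cases.
\begin{itemize}
\item \emph{Case (a): $G_1(a_t)>\epsilon$.} Some single direction gives a true gain above $\epsilon$. The damped-step form of the bound, $\epsilon^2/(\zeta_h+1)$, would come from a mixing/trust-region argument that trades the gain against the curvature bound $\zeta_h$.
\item \emph{Case (b): $G_1(a_t)\le\epsilon$ but $G_2(a_t)>6\sqrt{\zeta_{3rd}}\,\epsilon$.} The two-agent gain must then come from a positive mixed difference, roughly $\Delta^2_{u,v}\gtrsim 4\sqrt{\zeta_{3rd}}\,\epsilon$. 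I would use the third-order bound $\zeta_3$ to show this interaction persists after the move, giving an improvement of order $\epsilon^{3/2}/\zeta_{3rd}^{1/2}$ by analogy with cubic-regularized Newton steps.
\end{itemize}
Taking the minimum over the two cases gives $\nu(\epsilon)$ with some universal constant $c_1$.

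\textbf{Step 3: bound the error terms by $C_1\xi_{t+1}$.} Linear errors $|e|$ have to be converted into squared errors. I would use AM--GM in the form $|e|\le \tfrac{1}{2}(\kappa e^2+1/\kappa)$, or alternatively absorb half of $\nu(\epsilon)$, to arrive at $C_1=2+\zeta_g/\zeta_h$. The "$2$" accounts for the two value errors, and $\zeta_g/\zeta_h$ arises from the scaling used in the damped step of Case (a).

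\textbf{Main obstacle.} I expect two difficulties.
\begin{itemize}
\item The constants are delicate. A bound of the form $\nu-C_1\xi$ with no additive slack requires that the errors enter quadratically, which only holds after choosing the step scale adaptively. If that fails, I would settle for $\nu(\epsilon)/2 - C_1\xi_{t+1}$, absorbing the linear error when $\xi_{t+1}$ is small and using $\zeta_1$-boundedness otherwise.
\item On a discrete graph there is no continuous step size to damp. The $\epsilon^{3/2}$ rate in Case (b) therefore has to be justified directly from the sizes of the mixed differences rather than from a Taylor expansion.
\end{itemize}
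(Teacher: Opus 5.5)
There is a genuine gap. It lies in the error bookkeeping of Steps 1 and 3, not in the geometry, and under your greedy-on-the-graph reading of the update it cannot be closed. The first problem is what $\xi_{t+1}$ actually controls. By definition it is the error of $\theta_{t+1}$ at $a_{t+1}$, at one sampled neighbour $a_{t+1}^{(u_{t+1})}$, and in the two differences along one sampled pair $(u_{t+1},v_{t+1})$. It contains neither the error at $a_t$ nor the error at the specific improving deviation whose surrogate gain you lower-bound in Step 2, nor the error at the other neighbours the greedy step compares. This makes the inequality false for your update. Take an asinh-GLM instance in which $a_t$ plays the strictly best action for every agent except $i$, whose best action is $j$ and for whom $a_{t,i}$ is strictly second best. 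Calibrate it so that $b=a_t^{(i\leftarrow j)}$ gains $2\epsilon$; then every one- or two-agent move that does not set agent $i$ to $j$ strictly decreases $f$. Let $\theta_{t+1}$ agree with $\theta^\star$ except that the $(i,j)$ coordinate of $w(\theta_{t+1})$ is lowered until every action using $(i,j)$ falls below $a_t$ in the surrogate. The greedy step then stays at $a_t$, so the gain is $0<\nu(\epsilon)$. Yet $\xi_{t+1}=0$ whenever neither sampled direction is $(i\leftarrow j)$, because none of $a_t$, $a_t^{(u)}$, $a_t^{(v)}$, $a_t^{(u,v)}$ uses $(i,j)$. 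A repair needs an error functional covering the whole two-agent neighbourhood of $a_t$, for instance an average over uniformly drawn directions. Passing from such an average to the one bad direction costs a factor polynomial in $nd$, so $C_1$ cannot stay the dimension-free $2+\zeta_g/\zeta_h$.

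The second problem is that, even with such control, a move on the graph has no step size to tune. An error of size $\delta$ at the chosen point shifts the realized true gain by order $\delta$ but adds only order $\delta^2$ to $\xi_{t+1}$. Since $\max_\delta(\delta-C\delta^2)=1/(4C)$ must be paid for out of an available gain of order $\epsilon$ (when $G_1(a_t)$ is just above $\epsilon$), any bound of the form $\nu(\epsilon)-C_1\xi_{t+1}$ forces $C_1\gtrsim 1/\epsilon$. AM--GM or thresholding on $\xi_{t+1}$, including your fallback $\nu/2-C_1\xi_{t+1}$, can only reproduce such an $\epsilon$-dependent coefficient.

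The paper gets quadratic error dependence by leaving the graph. It works with $z_t=\phi(a_t)$ in a continuous embedding and splits on $\|g_t\|$ versus $\lambda_{\min}(\nabla^2 f)$ rather than on $G_1,G_2$. In Case I it takes the step $d_t=\hat g_t/\zeta_h$ with $\hat g_t=g_t+e_t$, and the descent lemma gives $\frac{1}{2\zeta_h}(\|g_t\|^2-\|e_t\|^2)$ because $\langle g_t,e_t\rangle$ cancels exactly. Case II invokes Nesterov--Polyak cubic regularization, which is where $\epsilon^2/(\zeta_h+1)$ and $\epsilon^{3/2}/\sqrt{\zeta_{3rd}}$ come from. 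Note, however, that the paper links all this to the loss only through the assertion $\|e_t\|^2\le\xi_{t+1}$. That is the same identification of an all-direction error at $a_t$ with a one-direction error that your Step 1 needs, and it does not follow from the definition of $\xi_{t+1}$.

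A provable discrete statement would read $f(a_{t+1})-f(a_t)\ge\min\{1,6\sqrt{\zeta_{3rd}}\}\,\epsilon-C\sqrt{\bar\xi_{t+1}}$. Here $\bar\xi_{t+1}$ is a neighbourhood-averaged error anchored at $a_t$ and $C$ depends on $nd$; the $\sqrt{\cdot}$ terms would then be summed with Cauchy--Schwarz in the theorem. In that version the geometry is immediate: a greedy step over the two-agent neighbourhood realizes $\max\{G_1,G_2\}$ up to errors. So your trust-region argument in Case (a) and the third-order ``persistence'' argument in Case (b) are unnecessary.
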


Lemma \ref{lemma:geometric} establishes a sufficient ascent property: the algorithm ensures a strict increase in the ground-truth reward at each step, modulo the online learner's estimation error. We now provide bounds on the estimation error.

\begin{lemma}[Estimation Error Control]
\label{lemma:statistical}
The cumulative prediction error is bounded by the generalization capacity of the learner:
\begin{equation}
    \sum_{t=1}^T \xi_t \le \sqrt{T \cdot \mathcal{R}_T}.
\end{equation}
In \textsc{NonZero}, we have $\mathcal{R}_T = \tilde{O}(\sqrt{T})$, which implies the cumulative error scales sub-linearly as $\tilde{O}(T^{3/4})$.
\end{lemma}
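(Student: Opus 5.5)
The plan is to treat the estimation step as online regression with the squared loss over the four supervision targets in \eqref{loss_nonuct}, and to reduce it to an online-learning regret bound measured by the sequential Rademacher complexity of the induced loss class. First, I would note that $\xi_t$ is the instantaneous excess loss of the learner's prediction $\hat y(\theta_t,x_t)$ against the realizable target $y(x_t)=\hat y(\theta^\star,x_t)$. Realizability holds because $\theta^\star$ is by definition the best-in-class fit to the planning-time reward head, so the comparator's loss is zero. Second, I would make the losses bounded. Under Assumption~\ref{ass:smoothness}, each difference coordinate of $\hat y$ is bounded: $|\Delta_u\eta|\le\zeta_1$ and $|\Delta^2_{u,v}\eta|\le\zeta_2$. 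I would further assume the value coordinates $\eta(\theta,a)$ are bounded, say by $B$. That is mild, since $\operatorname{asinh}$ of a bounded score is bounded. Then each term of $\xi_t$ is at most $(2\max\{B,\zeta_1,\zeta_2\})^2$, and the squared loss is Lipschitz on this range.

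Next I would invoke the standard online-learning result (Rakhlin--Sridharan--Tewari). There exists an online algorithm, which here is the gradient-based minimization of \eqref{loss_nonuct_7} run by \textsc{NonUCT}, whose cumulative loss minus the comparator's is at most of order $\mathcal{R}_T$, the sequential Rademacher complexity of the loss class $\{x\mapsto\|\hat y(\theta,x)-y(x)\|^2:\theta\in\Theta\}$. In the realizable case this gives $\sum_t\xi_t\lesssim\mathcal{R}_T$ directly. The claimed form $\sqrt{T\mathcal{R}_T}$ is weaker. I would obtain it as follows. Consider the unsquared errors $e_t=\|\hat y(\theta_t,x_t)-y(x_t)\|$, with $\xi_t=e_t^2$. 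An online learner on the absolute or Lipschitz loss gives $\sum_t e_t\lesssim\mathcal{R}_T$. By boundedness, $\xi_t\le 2M e_t$, where $M$ is the range bound. Combining this with Cauchy--Schwarz, $\sum_t e_t\le\sqrt{T\sum_t\xi_t}$, shows the stated bound is implied up to constants by either route. I would absorb the constants into the definition of $\mathcal{R}_T$ so the inequality reads exactly as stated.

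For the second claim, I would bound $\mathcal{R}_T$ for the asinh-GLM class. The score $z=\langle w(\theta),\psi(a)\rangle$ is linear, with $\|\psi(a)\|_2=\sqrt n$ for $n$-hot vectors and $\|w(\theta)\|\le W$. The sequential Rademacher complexity of bounded linear predictors is $O(W\sqrt{n}\sqrt T)$. The link $g$ is $c\alpha$-Lipschitz. Each coordinate of $\hat y$ is a signed sum of at most four evaluations of $g\circ z$. The squared loss is Lipschitz on the bounded range. Applying the sequential contraction lemma (which holds up to $\log^{3/2}T$ factors) then gives $\mathcal{R}_T=\tilde O(\sqrt T)$, with no dependence on $d^n$. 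Plugging in yields $\sum_t\xi_t\le\sqrt{T\cdot\tilde O(\sqrt T)}=\tilde O(T^{3/4})$.

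The main obstacle is justifying that the actual \textsc{NonUCT} update achieves the online regret guarantee rather than merely that some algorithm exists. The loss in $\theta$ is nonconvex through $w(\theta)$ and the link. To handle this, I would first try to restrict the online learner to the score parameters $w$. I would then use the fact that $\operatorname{asinh}$ is monotone with derivative bounded below on a bounded score range. This would allow an isotron/GLM-tron style surrogate loss, as in \cite{kalai2009isotron}, which is convex in $w$ and upper-bounds $\xi_t$ up to the curvature constant $\inf g'$. Online gradient descent on this surrogate then attains the $\sqrt T$-type rate.
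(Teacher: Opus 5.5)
Your argument is sound, but it takes a different route from the paper.

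\textbf{How the paper argues.} The paper works stochastically. It applies Azuma--Hoeffding to pass from the realized errors $\xi_t$ to conditional risks $R_t(f_t)$, which incurs an additive $\tilde O(\sqrt T)$ that the lemma's first display silently drops. It then symmetrizes to reach $\mathfrak{R}^{seq}_T(\ell\circ\mathcal{F})$. Finally it obtains the $\sqrt{T\cdot\mathcal{R}_T}$ shape by citing smooth-loss bounds (Srebro et al., Foster et al.) and simply identifying the ``effective capacity'' with $\mathfrak{R}^{seq}_T$. Its complexity computation is essentially your third paragraph: contraction through the squared loss, a factor $8$ for the difference operators, the $c\alpha$-Lipschitz link, and $WX\sqrt T$ for the linear class. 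The difference is that you correctly keep the $\log^{3/2}T$ cost of the sequential contraction lemma, which the paper omits.

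\textbf{What your route buys.} You stay in the adversarial online model and use realizability, which holds by construction since the targets $y(x)$ are evaluations of $\eta(\theta^\star,\cdot)$. This gives the stronger bound $\sum_t\xi_t\lesssim\mathcal{R}_T$, which you then weaken. The route needs no data distribution or concentration step and has no additive term. It also shows that under the paper's own assumptions the cumulative error is really $\tilde O(\sqrt T)$, so the $T^{3/4}$ claim holds a fortiori.

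Your last paragraph confronts something the paper never addresses: whether the learner actually attains the regret. Note, however, that it proves the lemma for online gradient descent on a convex surrogate in $w$ (with $\|w^\star\|\le W$), not for gradient descent on $\mathcal{L}_{\textsc{NonUCT}}$. The lemma should be stated for that learner. This deterministic route is also the one to rely on. The minimax-value existence bound permits randomized or improper learners, whereas here the same $\theta_t$ that is scored in $\xi_t$ also selects $a_t=\arg\max_a\eta(\theta_t,a)$.

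\textbf{Three small repairs.}

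\emph{The Cauchy--Schwarz step.} The inequality $\sum_t e_t\le\sqrt{T\sum_t\xi_t}$ points the wrong way for upper-bounding $\sum_t\xi_t$, and it is unnecessary. What you need is $\mathcal{R}_T\le B'T$ for losses in $[0,B']$, which gives $\mathcal{R}_T\le\sqrt{B'}\,\sqrt{T\mathcal{R}_T}$.

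\emph{Difference targets in the surrogate.} Since $\xi_t$ contains difference targets, first write $\xi_t\le 7\sum_b\big(\eta(\theta_t,b)-\eta(\theta^\star,b)\big)^2$ over $b\in\{a_t,a_t^{(u_t)},a_t^{(v_t)},a_t^{(u_t,v_t)}\}$. Then feed the surrogate all four points each round, including $a_t^{(v_t)}$, which is not a coordinate of $\hat y$.

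\emph{No lower bound on $g'$ is needed.} For monotone $L$-Lipschitz $g$ (here $L=c\alpha$), $\int_{z^\star}^{z}\big(g(s)-g(z^\star)\big)\,ds\ge\big(g(z)-g(z^\star)\big)^2/(2L)$. Realizability makes $w^\star$ a per-round minimizer of each surrogate term. So the $O(\sqrt T)$ regret of online gradient descent yields $\sum_t\xi_t=O(\sqrt T)$ directly, with no Rademacher or contraction machinery at all.
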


Now we verify the convergence. Summing the one-step improvement over $T$ iterations relates the total reward gain to the cumulative estimation error:
\begin{equation}
\label{eq:telescoping}
\begin{split}
    &\underbrace{\tilde{\eta}(\theta^*, a_T) - \tilde{\eta}(\theta^*, a_0)}_{\text{Total Gain}} = \sum_{t=0}^{T-1} (\tilde{\eta}(\theta^*, a_{t+1}) - \tilde{\eta}(\theta^*, a_t)) \\& \ge \sum_{t=0}^{T-1} \nu(\epsilon) \cdot \mathbb{I}(a_t \notin \mathcal{A}_{\epsilon}) - C_1 \sum_{t=1}^T \xi_t.
\end{split}
\end{equation}
Since the reward is bounded, the total gain is finite. This inequality implies that the algorithm cannot stay outside the local maximizer set $\mathcal{A}_{\epsilon}$ for too long, unless the estimation error is large. Formalizing this intuition leads to the following regret bound.

\begin{theorem}[Regret Bound of \textsc{NonUCT}]
\label{thm:bound}
For any tolerance $\epsilon \le \min(1, \zeta_{3rd}/16)$, the expected cumulative regret of \textsc{NonUCT} with respect to finding $(\epsilon, 6\sqrt{\zeta_{3rd}}\epsilon)$-local maxima is bounded by:
\begin{equation}
    \mathbb{E}[\text{Regret}_T] \le (1 + C_1 \sqrt{4T R_T}) \cdot \mathcal{K}(\epsilon),
\end{equation}
where $\mathcal{K}(\epsilon) = \max(4\zeta_h \epsilon^{-2}, \sqrt{\zeta_{3rd}} \epsilon^{-3/2})$ encapsulates the complexity regarding the landscape curvature and target precision.
\end{theorem}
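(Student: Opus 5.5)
The argument is a potential-function (telescoping) argument built on the two lemmas. I would define regret as the number of rounds $t\le T$ with $a_t\notin\mathcal{A}_{\epsilon,6\sqrt{\zeta_{3rd}}\epsilon}$, weighted by the per-step gap. The per-step gap is bounded because of Assumption~\ref{ass:smoothness}, and it can be normalized into the constant. So it suffices to bound $N_T:=\sum_{t=0}^{T-1}\mathbb{I}(a_t\notin\mathcal{A}_\epsilon)$ in expectation.

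\textbf{Step 1.} Start from the telescoped inequality \eqref{eq:telescoping}, which sums Lemma~\ref{lemma:geometric} over rounds. Rounds with $a_t\in\mathcal{A}_\epsilon$ need separate handling, since Lemma~\ref{lemma:geometric} gives no ascent there. Either I observe that the regret only counts steps before the first entry into $\mathcal{A}_\epsilon$ (as the paper's definition states), or I bound the possible descent on those steps by $C_1\xi_{t+1}$ as well. Both routes give
\[
\nu(\epsilon)\,N_T \le f(a_T)-f(a_0) + C_1\sum_{t=1}^T \xi_t .
\]

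\textbf{Step 2.} Bound the total gain. Since the reward is bounded, $f(a_T)-f(a_0)\le B$, where $B$ is the range of $f$. After rescaling I take $B\le 1$; this is where the leading ``$1+$'' in the statement comes from. Lemma~\ref{lemma:statistical} gives $\sum_t\xi_t\le\sqrt{T\mathcal{R}_T}\le\sqrt{4T\mathcal{R}_T}$; the factor $4$ absorbs the four components of $\xi_t$ and any $1/4$ normalization in \eqref{loss_nonuct_7}. Taking expectations over the random directions $v$ then yields
\[
\mathbb{E}[N_T]\le \bigl(1+C_1\sqrt{4T\mathcal{R}_T}\bigr)/\nu(\epsilon).
\]

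\textbf{Step 3.} Translate $1/\nu(\epsilon)$ into $\mathcal{K}(\epsilon)$. By definition,
\[
1/\nu(\epsilon)=c_1^{-1}\max\{(\zeta_h+1)\epsilon^{-2},\ \zeta_{3rd}^{1/2}\epsilon^{-3/2}\}.
\]
Using $\epsilon\le 1$ and fixing $c_1$ suitably (say $c_1\ge 1/2$ with $\zeta_h\ge 1$, so that $(\zeta_h+1)/c_1\le 4\zeta_h$), this is at most $\max(4\zeta_h\epsilon^{-2},\sqrt{\zeta_{3rd}}\epsilon^{-3/2})=\mathcal{K}(\epsilon)$. The condition $\epsilon\le\zeta_{3rd}/16$ is what Lemma~\ref{lemma:geometric} needs to guarantee $6\sqrt{\zeta_{3rd}}\epsilon$-second-order progress in the curvature branch. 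It also determines which term of the minimum in $\nu$ is active, so I would carry it through here.

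\textbf{Main obstacle.} The hard part is Step 1's treatment of rounds already inside $\mathcal{A}_\epsilon$, together with the index shift between $\xi_{t+1}$ and $\xi_t$. The algorithm may leave and re-enter the approximate-maximizer set, so the telescoping must not charge ascent there. I would first try to restrict to the stopping time $\tau$ of first entry, which telescopes cleanly up to $\tau$. If the paper's regret counts all outside steps, I would instead split the trajectory into excursions and charge each descent to the error term.
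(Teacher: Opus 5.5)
Your proposal is essentially the paper's proof: telescope Lemma~\ref{lemma:geometric}, bound the total gain by the normalized reward range $1$, invoke Lemma~\ref{lemma:statistical} with the factor $4$ for the four supervision components, and invert $\nu(\epsilon)$ to get $\mathcal{K}(\epsilon)$; your stopping-time treatment of rounds inside the target set is cleaner than the paper's, which writes a $\Delta_{\text{drop}}(1-I_t)$ term and then silently discards it when summing. One small slip in Step 3: with $c_1\ge 1/2$ the cubic term of $1/\nu(\epsilon)$ is only bounded by $2\sqrt{\zeta_{3rd}}\,\epsilon^{-3/2}$, so matching $\mathcal{K}(\epsilon)$ exactly needs $c_1\ge 1$, or else absorbing $1/c_1$ into $\mathcal{K}(\epsilon)$ as the paper does.
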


\begin{corollary} [The Order of Regret Bound for \textsc{NonUCT}]
\label{order_of_regret}
Under the above conditions, the cumulative regret bound of \textsc{NonUCT} satisfies:
\begin{equation}
    \mathbb{E}[\text{Regret}_T] = \tilde{O}\left(T^{3/4} \right).
\end{equation}
\end{corollary}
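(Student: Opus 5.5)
The corollary follows by reading off the orders of the two factors in Theorem~\ref{thm:bound}. The plan is to treat $\epsilon$, and hence $\mathcal{K}(\epsilon)$, as fixed, and to substitute the complexity rate from Lemma~\ref{lemma:statistical}.

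First, I fix any admissible tolerance $\epsilon\le\min(1,\zeta_{3rd}/16)$ that does not depend on $T$. Then
\[
\mathcal{K}(\epsilon)=\max\bigl(4\zeta_h\epsilon^{-2},\sqrt{\zeta_{3rd}}\,\epsilon^{-3/2}\bigr)
\]
is a constant depending only on the landscape parameters $\zeta_h,\zeta_{3rd}$ and the target precision. Likewise $C_1=2+\zeta_g/\zeta_h$ is a $T$-independent constant. Both are therefore absorbed into the $\tilde O(\cdot)$.

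Second, I invoke Lemma~\ref{lemma:statistical}, which states that in \textsc{NonZero} the sequential Rademacher-type complexity satisfies $\mathcal{R}_T=\tilde O(\sqrt{T})$. Substituting this into the theorem gives
\[
\sqrt{4T\mathcal{R}_T}=\sqrt{\tilde O(T^{3/2})}=\tilde O(T^{3/4}),
\]
where the polylogarithmic factors inside $\mathcal{R}_T$ survive the square root only as polylogarithmic factors. Consequently
\[
(1+C_1\sqrt{4T\mathcal{R}_T})\,\mathcal{K}(\epsilon)=\mathcal{K}(\epsilon)+C_1\mathcal{K}(\epsilon)\,\tilde O(T^{3/4})=\tilde O(T^{3/4}).
\]
The additive constant $\mathcal{K}(\epsilon)$ is dominated for $T\ge1$.

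The only delicate point is the meaning of $\tilde O$. If one wanted a bound uniform in $\epsilon$, for instance by tuning $\epsilon$ with $T$, the dependence through $\mathcal{K}(\epsilon)$ would change the exponent. I will therefore state explicitly that $\epsilon$ is held fixed, so that $\tilde O$ hides constants in $\epsilon,\zeta_h,\zeta_g,\zeta_{3rd}$ and polylogarithmic factors in $T$. I also need to check that the $\tilde O(\sqrt T)$ rate for $\mathcal{R}_T$ in Lemma~\ref{lemma:statistical} holds for the surrogate class as used by \textsc{NonUCT}, which the lemma asserts. Beyond this bookkeeping there is no real obstacle: the corollary is an immediate substitution.
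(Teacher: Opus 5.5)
Your proposal is correct and follows the paper's proof essentially step for step. The paper substitutes the explicit bound $\mathcal{R}_T \le C_{\text{stat}}\sqrt{T}$ (from Step 3 of the proof of Lemma~\ref{lemma:statistical}) into Theorem~\ref{thm:bound}, obtains $\mathcal{K}(\epsilon) + 2C_1\sqrt{C_{\text{stat}}}\,\mathcal{K}(\epsilon)\,T^{3/4}$, and then holds $\epsilon,\zeta_h,\zeta_{3rd}$ fixed so the constant term is dominated, which matches your explicit caveat about what $\tilde O$ hides.
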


The regret bound of \textsc{NonUCT} in Theorem \ref{thm:bound} is action-dimension free while sublinear. We quantify the efficiency gap as follows to demonstrate \textsc{NonZero}'s exploration efficiency.

\begin{theorem}[Efficiency Separation]
\label{efficiency_separation}
Let $T_{\text{NonUCT}}(d, \epsilon)$ and $T_{\text{UCB}}(d, \epsilon)$ denote the number of rounds required to output an approximate local maximizer. The separation ratio $\zeta_{\text{sep}}$ satisfies:
\begin{equation}
    \zeta_{\text{sep}} \triangleq \frac{T_{\text{UCB}}(d, \epsilon)}{T_{\text{NonUCT}}(d, \epsilon)} \ge \frac{\exp(c \cdot nd)}{\text{poly}(nd, \epsilon^{-1})},
\end{equation}
where $c > 0$ is a universal constant.
\end{theorem}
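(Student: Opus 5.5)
The separation is the ratio of a lower bound for global-optimism UCB to an upper bound for \textsc{NonUCT}, so I will bound the two quantities separately and divide.

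\textbf{Upper bound on $T_{\text{NonUCT}}(d,\epsilon)$.} I will rerun the telescoping argument of Eq.~\eqref{eq:telescoping} in the form of Theorem~\ref{thm:bound}. The reward is bounded, so the total gain $\tilde\eta(\theta^*,a_T)-\tilde\eta(\theta^*,a_0)$ is at most $B$. Lemma~\ref{lemma:geometric} says every step taken outside the approximate-maximizer set earns at least $\nu(\epsilon)$. Lemma~\ref{lemma:statistical} says the accumulated error penalty is at most $C_1\sqrt{T\mathcal{R}_T}=\tilde O(T^{3/4})$. Combining these, the number of steps spent outside the set satisfies
\[
N_{\text{out}}\,\nu(\epsilon)\le B+C_1\tilde O(T^{3/4}).
\]
Hence after $T$ rounds at least $T-N_{\text{out}}$ iterates are approximate local maximizers. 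Solving $T\ge 2\big(B+C_1\tilde O(T^{3/4})\big)/\nu(\epsilon)$ gives
\[
T_{\text{NonUCT}}=\tilde O\big(\nu(\epsilon)^{-4}\big)=\mathrm{poly}(\epsilon^{-1},\zeta_h,\zeta_{3rd}).
\]
The remaining dimension dependence enters only through $\mathcal{R}_T$ and the number of deviation directions. The sequential Rademacher complexity of the asinh-GLM class with $w\in\mathbb{R}^{nd}$ scales as $\mathrm{poly}(nd)$. There are $n(d-1)$ one-agent directions and $O(n^2d^2)$ two-agent directions, so identifying the best proposal costs only $\mathrm{poly}(nd)$ model evaluations. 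Together, $T_{\text{NonUCT}}\le \mathrm{poly}(nd,\epsilon^{-1})$.

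\textbf{Lower bound on $T_{\text{UCB}}(d,\epsilon)$.} I take UCB to be the unstructured optimistic rule of Eq.~\eqref{eq:standard_pUCT} over the joint arms, with each arm's statistics kept independently. Such a rule must pull every arm at least once before its confidence bonus can shrink. In the worst case it therefore cannot certify even a graph-local maximizer before it has examined a constant fraction of the candidates.

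To make this rigorous I will use a needle-type construction inside the asinh-GLM class. I choose $w(\theta^*)$ so that the landscape is nearly flat except for a single hidden coordinated pattern, where the reward rises by more than $\epsilon$. This makes every point away from the pattern fail the $(\epsilon,6\sqrt{\zeta_{3rd}}\epsilon)$ test because of a nearby improving two-agent deviation. Index-based UCB cannot exploit the graph structure, so a standard adversary (Yao-type) argument over a random placement of the pattern shows it needs $\Omega(|\mathcal{A}|)$ pulls. With $d\ge2$ this is
\[
|\mathcal{A}|=d^n\ge 2^n=\exp\!\big((\ln 2/d)\,nd\big).
\]
Dividing this by the $\mathrm{poly}(nd,\epsilon^{-1})$ upper bound gives the claimed form.

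\textbf{Main obstacle.} The hard step is making the lower bound genuinely exponential in $nd$ with a universal constant $c$, rather than in $n\log d$. The count $d^n=\exp(n\ln d)$ is $\exp(c\,nd)$ only if $c$ is allowed to depend on $d$. My first attempt is to follow the paper's framing, where UCB's global optimism requires $O(e^{nd})$ samples: I will treat the UCB baseline as running over the $nd$-dimensional binary relaxation $\{0,1\}^{nd}$, whose cardinality is $2^{nd}$, so that $c=\ln 2$.

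If that fails, I will state the separation with $c$ depending on $d$ through $\ln d/d$. The cleaner alternative is to invoke the known exponential lower bounds for global optimization of nonlinear (ridge/GLM) bandits from \cite{dong2021provable}, which give $\exp(\Omega(nd))$ directly for a dimension-$nd$ parameter. A second delicate point is ensuring that the hard instance still satisfies Assumption~\ref{ass:smoothness}. I will address this by taking the needle height to be $O(\epsilon)$, so that $\zeta_1$, $\zeta_2$ and $\zeta_3$ stay bounded.
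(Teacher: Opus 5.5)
\paragraph{Where the gap is.} Your upper bound on $T_{\text{NonUCT}}$ is essentially the paper's Step~2: insert $\mathcal{R}_T\lesssim\sqrt{nd}\,\sqrt{T}$ into Theorem~\ref{thm:bound} and solve for $T$. Your ``$N_{\text{out}}<T$'' conversion is in fact cleaner than the paper's average-gap conversion. The problem is the lower bound, because your needle instance does not do what you claim.

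If the landscape is nearly flat away from one hidden pattern, then every $a$ that differs from the pattern in at least three agents has all its one- and two-agent neighbours in the flat region. Hence $G_1(a)\approx 0\le\epsilon$ and $G_2(a)\approx 0\le 6\sqrt{\zeta_{3rd}}\,\epsilon$, so $a$ is itself an approximate local maximizer. That holds for all but $O(n^2d^2)$ of the $d^n$ joint actions. A structure-blind UCB therefore outputs a valid target on essentially its first pull, and the $\Omega(|\mathcal{A}|)$ bound collapses.

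Two related points. First, ``cannot certify before examining a constant fraction of the candidates'' does not follow from the local target: certifying a given $a$ needs only its $O(n^2d^2)$ one- and two-agent neighbours. Second, if $\psi(a)$ is the $n$-hot encoding, the asinh-GLM is a strictly increasing function of the additive score $\sum_i w_{i,a_i}$. A purely coordinated needle then cannot exist in that class, since a positive two-agent score change forces one of its one-agent parts to be positive.

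A hard instance for the local target must leave an improving short move at every non-target point. An example is a score that increases with the number of agents matching a uniformly random hidden pattern. The $\Omega(d^n)$ bound then comes from UCB choosing among unvisited arms independently of the observed rewards. The paper sidesteps all of this by lower-bounding $\epsilon$-\emph{global} identification over the unstructured class $\mathcal{E}_{\text{flat}}$, using the standard $K\epsilon^{-2}$ minimax bound with $K=d^n$. That is a different target and class than the statement names, but it does not self-destruct.

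\paragraph{The universal constant.} Your ``main obstacle'' is real, and the paper's proof does not overcome it either. It writes $d^n=\exp(nd\cdot\ln d/d)$ and asserts $\ln d/d\ge\ln 2/2$ for all $d\ge 2$. That fails for every $d\ge 5$, and indeed $\ln d/d\to 0$.

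Your first and third fixes cannot work. Running UCB on $\{0,1\}^{nd}$ replaces the $d^n$ legal joint actions by $2^{nd}$ vectors, almost none of which are $n$-hot. The quantity you would bound is then no longer $T_{\text{UCB}}(d,\epsilon)$.

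No exponential-in-$nd$ lower bound can be imported from the nonlinear-bandit literature either, because its hard instances live on continuous action sets. Here the rewards are deterministic, so a UCB that initializes by pulling every arm once has visited the global maximizer (which is also a local one) within $d^n$ rounds. Hence $\zeta_{\text{sep}}\le d^n$. For $n=1$ this is $d$, which is eventually smaller than $\exp(c\,d)/\mathrm{poly}(d,\epsilon^{-1})$ for any fixed $c>0$, so the universal-constant form is not provable in general.

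Only your fallback is attainable: $c=c(d)=\ln d/d$, or equivalently a numerator $d^n$ instead of $\exp(c\,nd)$. That is exactly what the paper's argument actually establishes.
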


\textsc{NonUCT} achieves action-dimension-free convergence in the sense that its sample complexity scales polynomially with the model's statistical capacity, whereas standard UCB suffers from an unavoidable curse of dimensionality in the joint action space.

\subsection{Algorithm Framework}
\begin{algorithm}[!h]
\caption{\textsc{NonZero} MCTS with Discrete Curvature-Guided Proposals}
\label{alg:nonzero}
\begin{algorithmic}[1]
\STATE \textbf{Input:} root state $s_0$
\STATE \textbf{Hyperparameters:} simulations $N_{\text{sim}}$, candidate budget $K$, learning rate $\gamma$

\FOR{$\text{sim}=1$ to $N_{\text{sim}}$}
    \STATE $path \gets \textsc{Select}(s_0)$
    \STATE $leaf \gets \text{last}(path)$
    \IF{$leaf$ not terminal}
        \STATE \textsc{Expand}(leaf)
    \ENDIF
    \STATE \textsc{Backup}(path)
\ENDFOR
\STATE \textbf{return} $\pi(s_0)$ from visit counts

\vspace{0.3em}\hrule\vspace{0.3em}

\STATE \textbf{Procedure} \textsc{Select}$(node)$
\WHILE{$node$ has children}
    \STATE \(a^\star \gets \arg\max_{a\in\mathcal{C}(node)} \quad\eta(\theta_{node},a)\)
    \STATE $node \gets node.\text{child}(a^\star)$
    \STATE append $node$ to $path$
\ENDWHILE
\STATE \textbf{return} $path$

\vspace{0.3em}\hrule\vspace{0.3em}

\STATE \textbf{Procedure} \textsc{Expand}
\STATE Sample $\mathcal{C}(node)$ with policy-prior samples
\FORALL{$a \in \mathcal{C}(node)$}
    \STATE \(node^\prime \gets node.child(a)\)
    \STATE $\theta \gets \textsc{HyperNetwork}(node^\prime.state)$
    \STATE Sample $u=(i\!\leftarrow\!j)$, $v=(k\!\leftarrow\!\ell)$, $i\neq k$
    \STATE Form $a^{(u)}$, $a^{(u,v)}$
    \STATE $\widehat{\Delta}_u \gets \eta(\theta,a^{(u)})-\eta(\theta,a)$
    \STATE $\widehat{\Delta}^2_{u,v} \gets \eta(\theta,a^{(u,v)})-\eta(\theta,a)$
\ENDFOR

\vspace{0.3em}\hrule\vspace{0.3em}

\STATE \textbf{Procedure} \textsc{Backup}$(path)$
\FOR{$node$ in $path$ (reversed)}
    \STATE Update $N(node,\cdot)$ and $Q(node,\cdot)$
    \STATE Sample $u=(i\!\leftarrow\!j)$, $v=(k\!\leftarrow\!\ell)$, $i\neq k$
    \STATE Pick $a\in\mathcal{C}(node)$
    \STATE Compute $r(a)$, $r(a^{(u)})$, $r(a^{(u,v)})$
    \STATE $\Delta_u r \gets r(a^{(u)})-r(a)$
    \STATE $\Delta^2_{u,v} r \gets r(a^{(u,v)})-r(a^{(u)})-r(a^{(v)})+r(a)$
    \STATE $\theta_{node} \gets \theta_{node}-\gamma\nabla_\theta L_{\textsc{NonUCT}}$
\ENDFOR
\end{algorithmic}
\end{algorithm}

Integrated with curvature-aware learning, \textsc{NonZero} achieves efficient exploitation and exploration in complex MCTS. The \textsc{NonZero} model is unrolled with four key phases as standard MCTS and trained in an end-to-end manner like MuZero \cite{muzero}. The detailed pseudocode of NonZero is shown in Algorithm \ref{alg:nonzero}.

\textsc{NonZero} augments the standard MCTS lifecycle by replacing the UCB heuristic in selection with a curvature-aware optimization routine. In the \textit{Selection} and \textit{Back-Propagation} phase, rather than relying on scalar confidence bounds that scale poorly in combinatorial spaces, {the algorithm selects among the currently expanded child actions using the learned surrogate optimized by \textsc{NonUCT}.} This step exploits the \(\operatorname{asinh}\)-induced smoothness to efficiently navigate the joint-action space, guiding the search path toward the $(\epsilon_g, \epsilon_h)$-approximate local maximizers defined in Section \ref{sec:theory}. This mechanism allows the agent to synthesize complex multi-agent coordination strategies without exhaustively enumerating the exponential action space.

To ensure the reward landscape $\eta(\hat\theta, \cdot)$ accurately reflects the ground truth, \textsc{NonZero} incorporates a two-stage estimation strategy. Subsequently, during \textit{Back-propagation}, the algorithm performs online adaptation to refine this estimate. By collecting the reward evaluations and estimating the local geometry via the random projection operators (Eq. \ref{gradient} and \ref{hessian}), we update the node-specific parameter $\hat\theta$ by minimizing the curvature loss $\mathcal{L}_{\operatorname{\textsc{NonUCT}}}$ via gradient descent. This ensures that the local approximation progressively aligns with the true environmental dynamics as the simulation depth increases.

\paragraph{Hypernetworks for parameter estimation warm-up.} To achieve faster convergence of \(\theta\), the \textsc{NonZero} model involves a hypernetwork \cite{ha2016hypernetworks} to predict the initialization of the state-wise \(\theta\) when a new node is added to the search tree. Therefore, the gathered interaction information to estimate \(\theta\) will be accumulated throughout the training process, which accelerates the convergence of \textsc{NonZero}. Specifically, the hypernetwork takes the state \(s_{t+k}\) as the input and generates the parameter \(\theta\). The generated \(\theta\) will be further optimized in the following MCTS iterations. 

\section{Experiments}

\begin{table*}[!h]
\caption{Performance comparison on the MatGame benchmark across varying numbers of agents and action dimensions, encompassing both linear and non-linear reward landscapes. NonZero demonstrates superior sample efficiency and asymptotic performance compared to MCTS and MARL baselines, particularly in high-dimensional settings with limited simulation budgets. Notably, the performance gap is most pronounced in non-linear scenarios with to 14\% improvement, as curvature-aware exploration effectively prevents the local stagnation observed in baseline methods. See Appendix \ref{appendix_D} for detailed environmental specifications.}
\label{tab:table_1}
\centering
\setlength{\tabcolsep}{0.6mm}{
\scalebox{1}{
\begin{tabular}{@{}cccc|c|c|c|c|c|c@{}}
\toprule\toprule
Agent & Action & Type & Steps & MAZero & MAZero-NP & MA-AlphaZero & MAPPO & QMIX & NonZero(Ours) \\ \hline
2     & 3      & Linear   & 500  & \(51.9\pm2.3\)      & \(49.7\pm3.9\)         &  \(50.8\pm3.2\)            &  \(50.2\pm2.9\)     &  \(50.4\pm3.5\)    & \(\mathbf{54.7\pm0.8}\) \\
2     & 3      & Linear   & 1000  & \(57.8\pm2.4\)      & \(53.1\pm3.3\)         & \(55.2\pm2.7\)            & \(56.4\pm3.1\)      & \(54.3\pm3.17\)    & \(\mathbf{59.8\pm0.3}\) \\
\hline
2     & 3      & Non-Linear   & 500  & \(49.1\pm15.3\)      & \(48.9\pm17.2\)         & \(49.0\pm16.4\)            & \(49.1\pm19.1\)     & \(48.7\pm18.6\)    & \(\mathbf{49.7\pm7.8}\) \\
2     & 3      & Non-Linear   & 1000  & \(47.6\pm14.7\)      & \(49.3\pm14.3\)         & \(49.2\pm12.9\)            & \(49.5\pm18.1\)      & \(49.1\pm17.7\)     & \(\mathbf{49.9\pm16.3}\) \\
\hline

4     & 5      & Linear   & 1000  & \(175.2\pm4.4\)     & \(171.7\pm5.6\)         & \(172.7\pm4.1\)            & \(173.1\pm5.4\)     & \(171.8\pm4.9\)    & \(\mathbf{186.2\pm4.1}\) \\
4     & 5      & Linear   & 2000  & \(191.7\pm2.3\)       & \(190.1\pm1.2\)         & \(190.4\pm1.9\)            & \(189.8\pm2.1\)     & \(190.2\pm1.8\)    & \(\mathbf{198.9\pm2.3}\) \\
\hline
4     & 5      & Non-Linear   & 1000  & \(179.4\pm11.7\)      & \(173.2\pm10.0\)         & \(174.5\pm9.3\)            & \(173.1\pm8.0\)     & \(174.7\pm9.4\)    & \(\mathbf{184.1\pm12.2}\) \\
4     & 5      & Non-Linear   & 2000  & \(195.4\pm20.0\)      & \(192.4\pm12.8\)         & \(192.7\pm11.4\)            & \(191.9\pm12.5\)     & \(190.3\pm10.7\)    & \(\mathbf{199.1\pm20.7}\) \\
\hline

6     & 8      & Linear   & 1000  & \(393.7\pm9.9\) &\(387.2\pm10.1\)  & \(389.3\pm8.4\) & \(390.6\pm9.2\)    & \(386.1\pm10.4\)  & \(\mathbf{398.2\pm8.9}\)   \\
6     & 8      & Linear   & 2000  & \(434.2\pm7.2\)     & \(427.3\pm9.3\)         & \(432.6\pm9.5\)            & \(431.8\pm8.4\)     & \(430.1\pm9.5\)    & \(\mathbf{441.3\pm6.5}\)   \\
\hline
6     & 8      & Non-Linear   & 1000  & \(399.8\pm13.7\)      & \(391.3\pm10.3\)         &  \(393.1\pm12.1\)           & \(388.8\pm13.1\)     & \(390.5\pm12.2\)    & \(\mathbf{412.5\pm9.2}\) \\
6     & 8      & Non-Linear  & 2000  & \(443.9\pm12.1\)      & \(429.1\pm9.3\)         & \(427.1\pm8.6\)            & \(430.1\pm8.5\)     & \(431.7\pm7.6\)    & \(\mathbf{457.2\pm13.7}\) \\
\hline

8     & 10      & Linear   & 1000  & \(618.8\pm16.9\)      & \(608.8\pm17.6\)         & \(613.1\pm13.1\)           & \(617.1\pm11.1\)     & \(612.7\pm15.4\)    & \(\mathbf{641.3\pm16.4}\) \\
8     & 10     & Linear   & 2000  & \(692.7\pm14.5\)      & \(671.5\pm13.9\)         & \(654.3\pm14.5\)             & \(681.8\pm12.5\)      & \(679.4\pm12.7\)     & \(\mathbf{712.4\pm16.2}\) \\
\hline
8     & 10      & Non-Linear   & 1000  & \(615.2\pm18.7\)      & \(536.6\pm24.1\)         & \(573.2\pm22.7\)            & \(561.4\pm20.9\)      & \(558.7\pm19.1\)    & \(\mathbf{637.2\pm17.1}\) \\
8     & 10      & Non-Linear  & 2000  & \(672.3\pm16.1\)      & \(587.2\pm18.4\)         & \(633.2\pm15.6\)            & \(657.1\pm17.3\)     & \(648.2\pm18.7\)5    & \(\mathbf{697.1\pm16.4}\) \\

\bottomrule\bottomrule
\end{tabular}}}
\end{table*}


\begin{figure*}[!h]
  \vspace{-0.1in}
  \centering
  \includegraphics[width=0.33\linewidth]{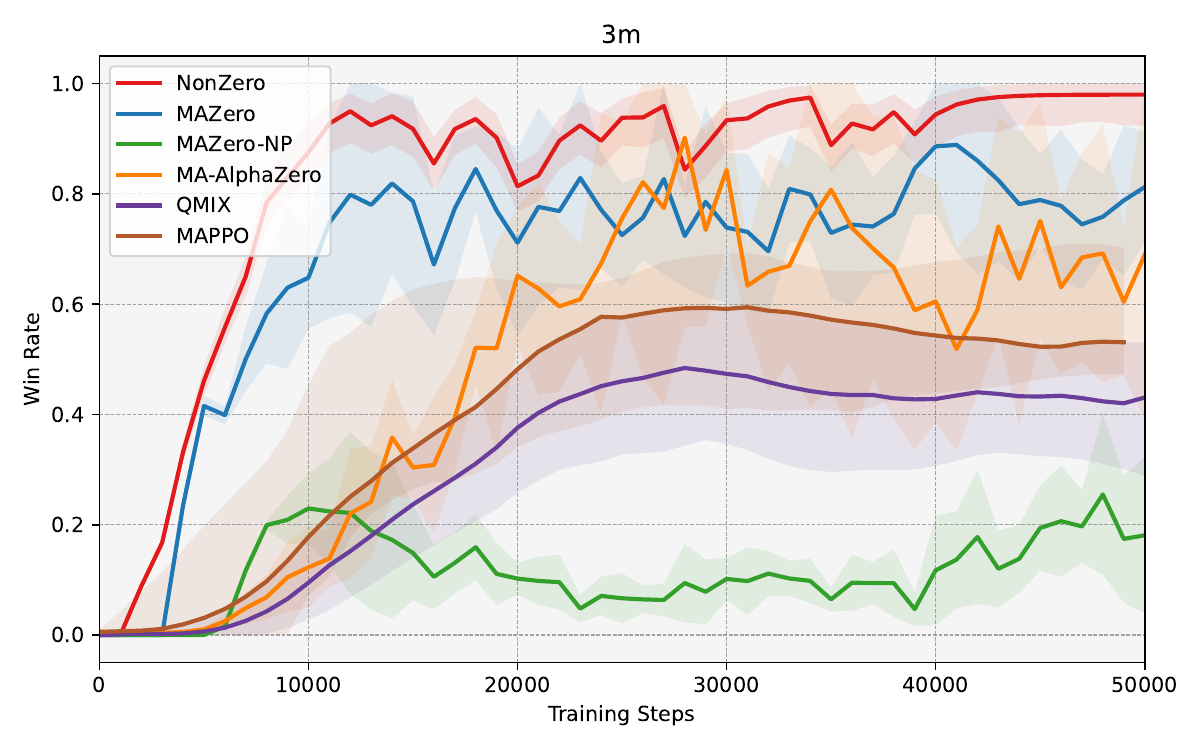}\hfill
  \includegraphics[width=0.33\linewidth]{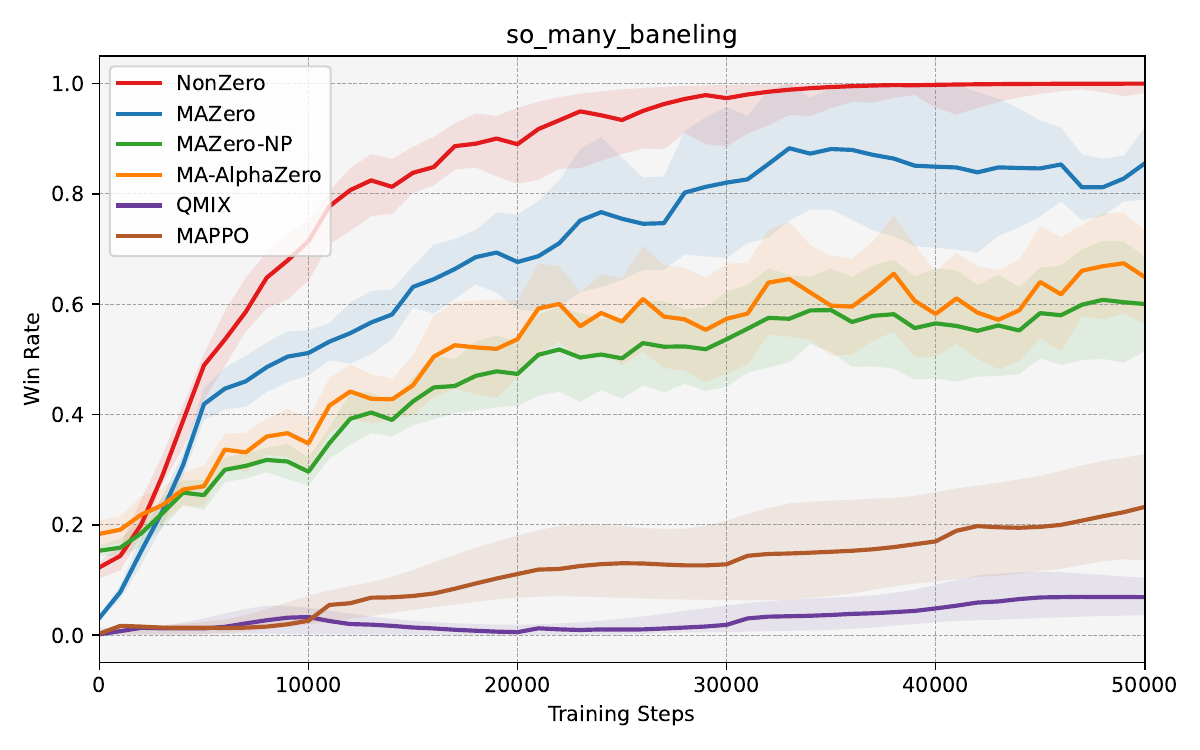} \hfill
  \includegraphics[width=0.33\linewidth]{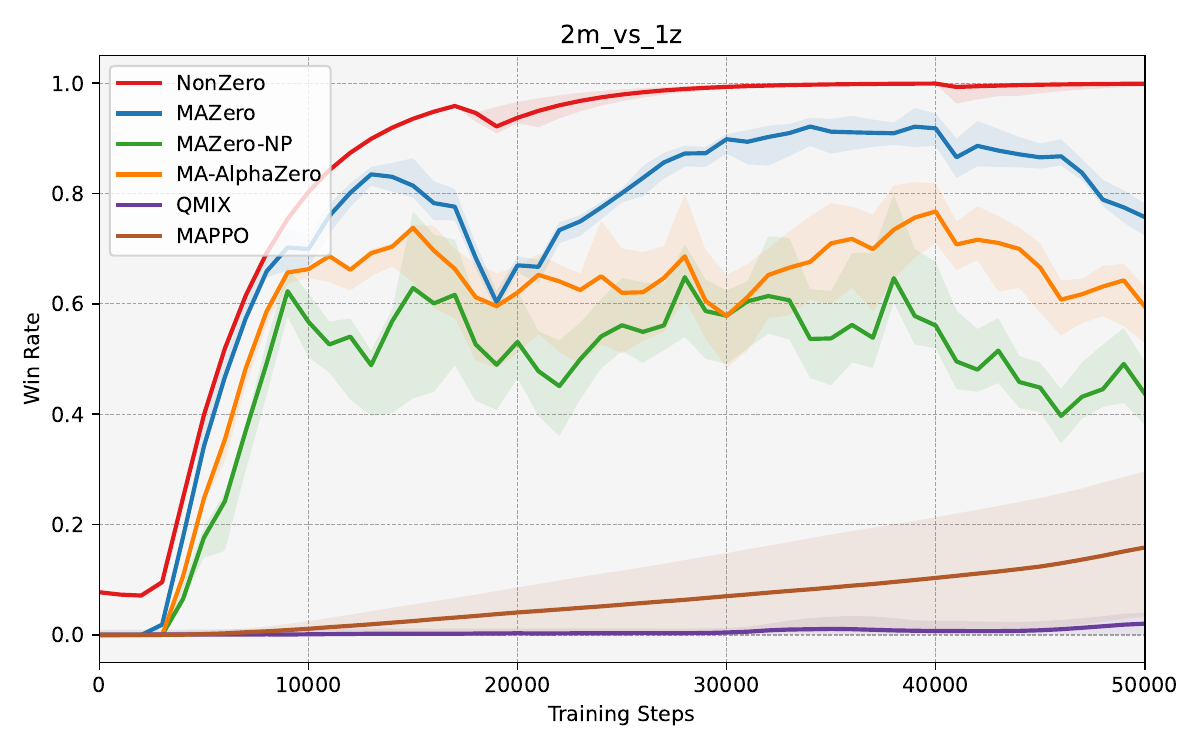}
  \vspace{-0.1in}
  \caption{Evaluations on 3 SMAC tasks/maps. Y-axis denotes the win rate and X-axis denotes training steps. Each algorithm is executed with 3 random seeds. NonZero achieves over 96\% winning rate on all 3 maps, outperforming all baselines and also gets high winning rate much faster.}
    \vspace{-0.1in}
  \label{fig:smac}
\end{figure*}

\begin{figure*}[!h]
  \centering
    \vspace{-0.1in}
  \includegraphics[width=0.32\linewidth]{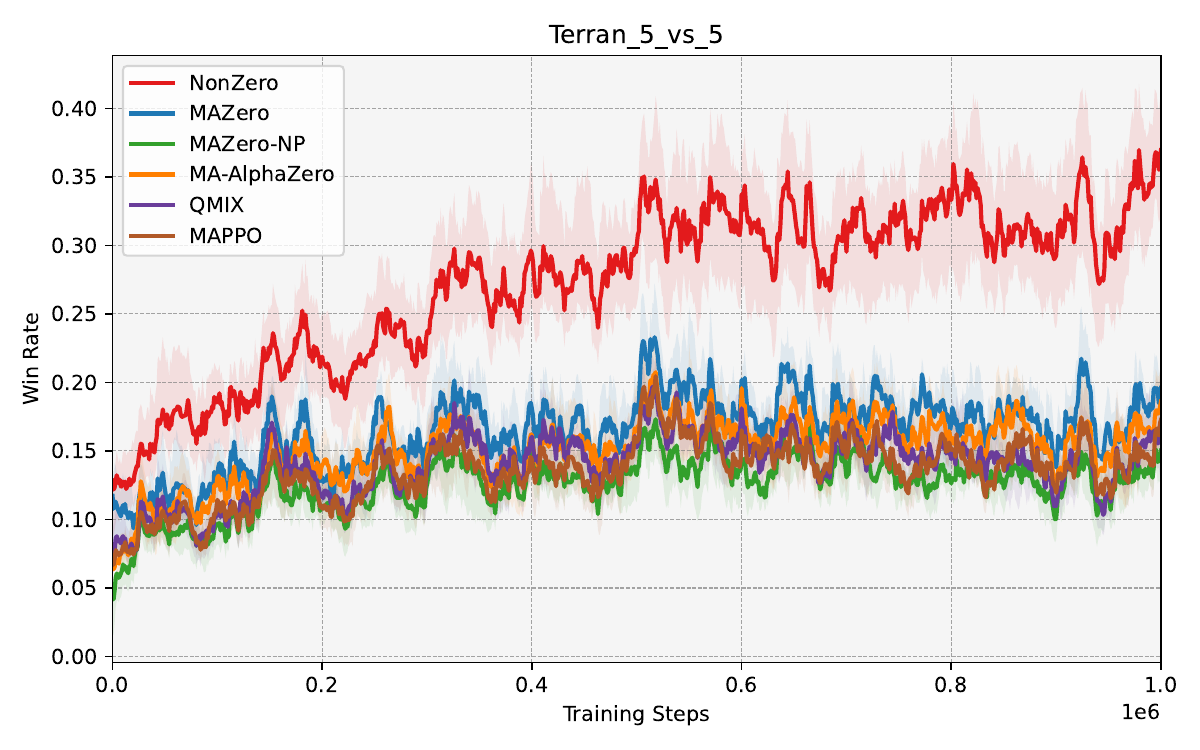}\hfill
  \includegraphics[width=0.32\linewidth]{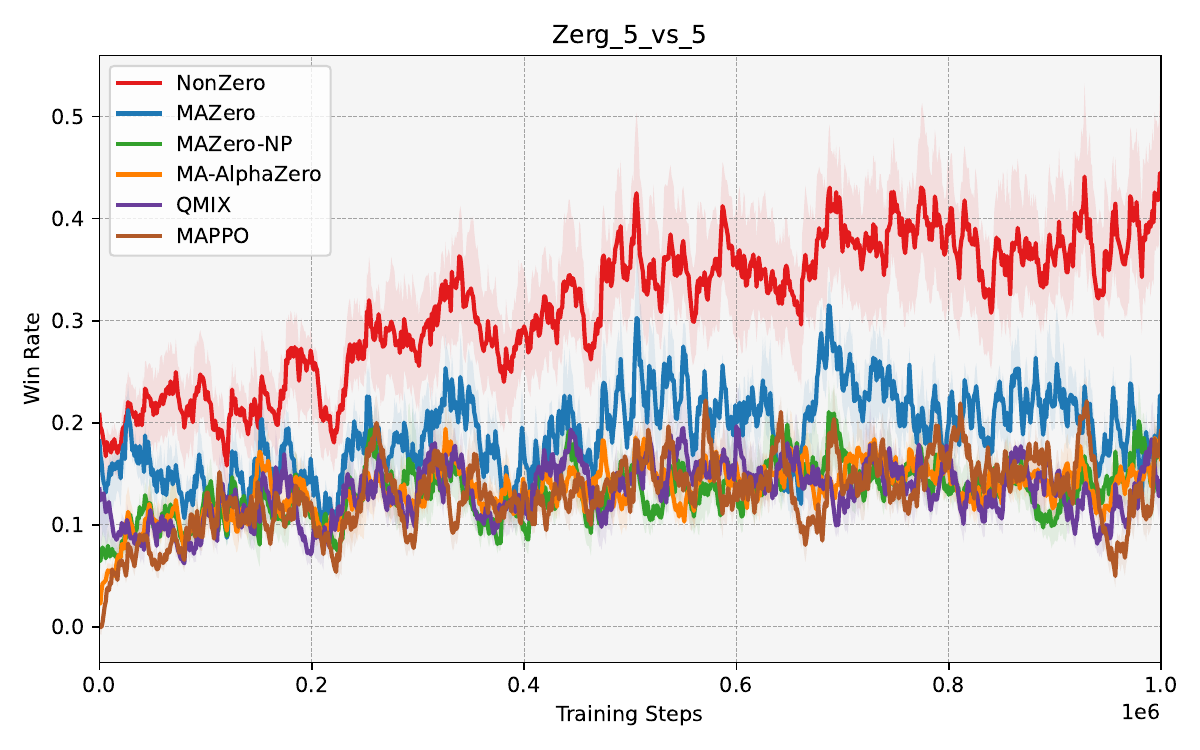} \hfill
  \includegraphics[width=0.32\linewidth]{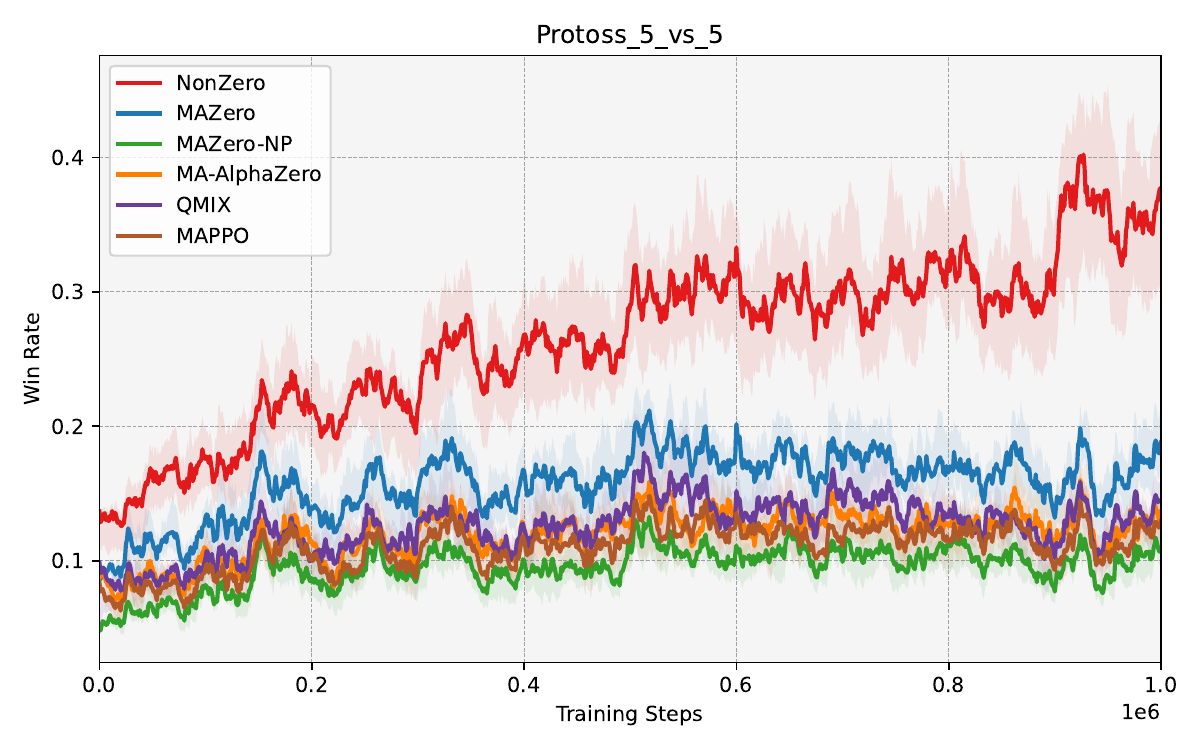}
    \vspace{-0.1in}
  \caption{Comparisons on 3 SMACv2 tasks/maps.Y-axis denotes the win rate and X-axis denotes the training steps. NonZero nearly doubles the winning rate on these challenging maps in SMACv2 and consistently outperforms all baselines. Each algorithm is executed with 3 random seeds.}
    \vspace{-0.1in}
  \label{fig:smacv2}
\end{figure*}

\begin{figure}[!h]
  \vspace{-0.07in}
  \centering
  \includegraphics[width=0.5\linewidth]{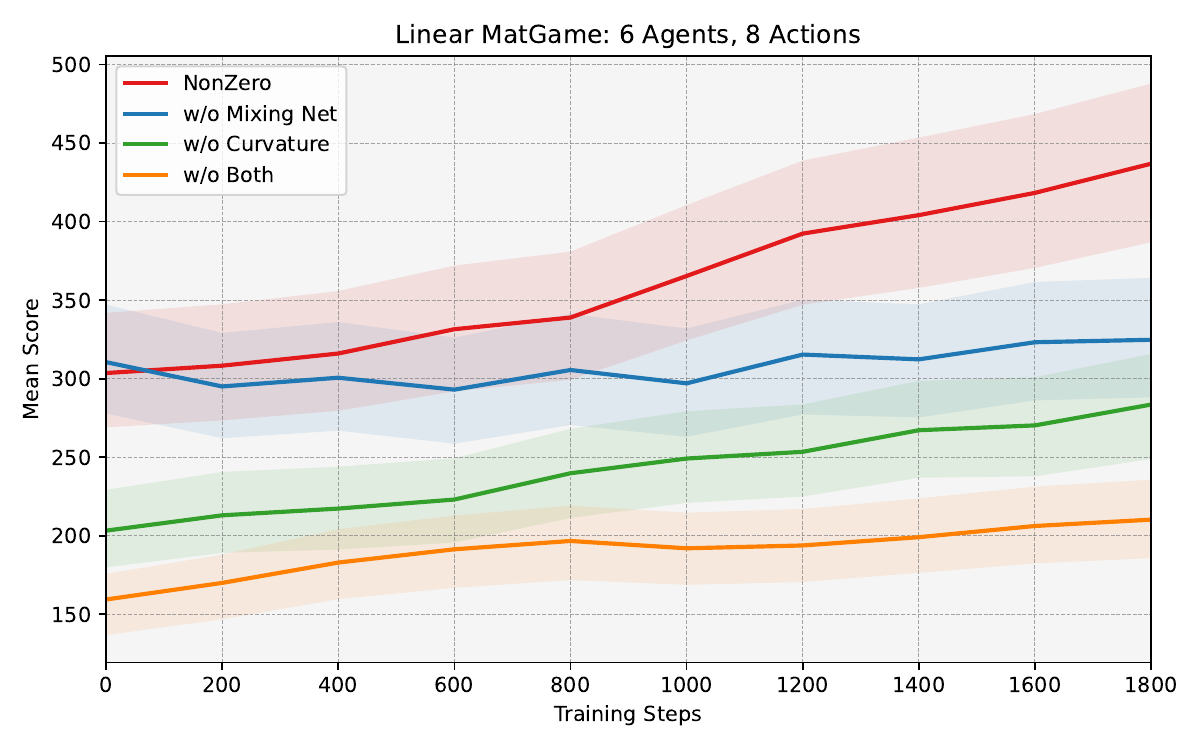}\hfill
  \includegraphics[width=0.5\linewidth]{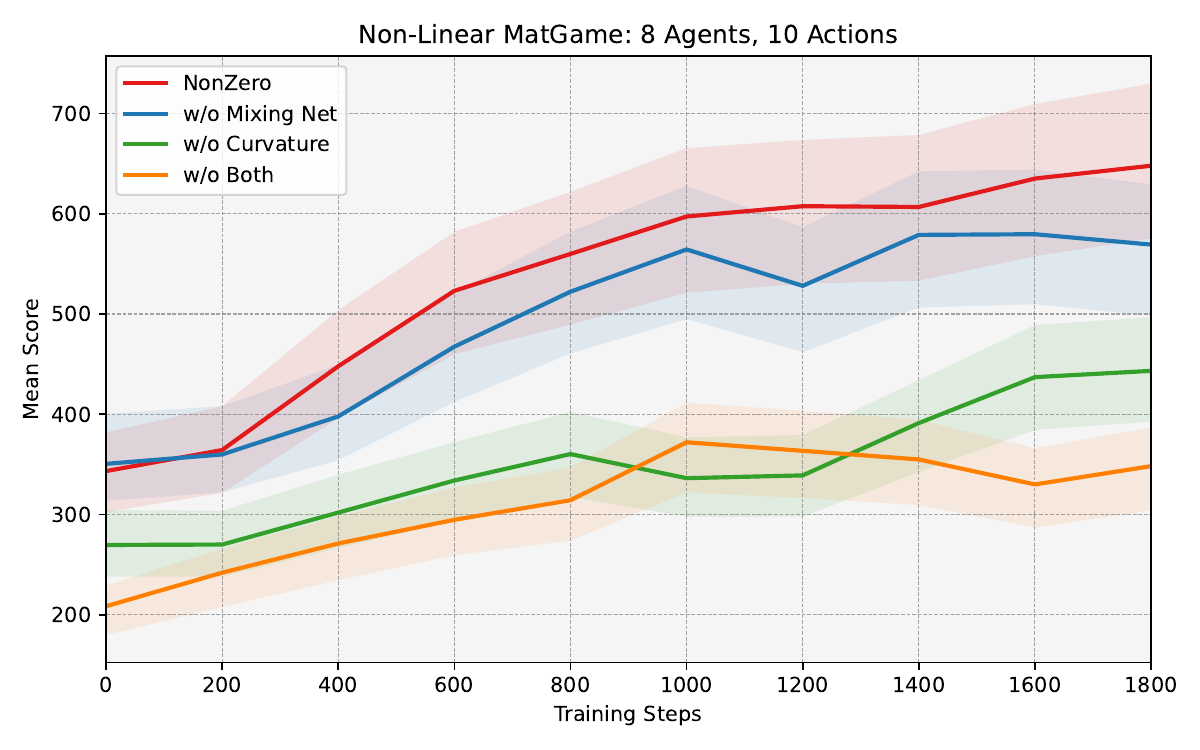}\hfill
  \vspace{-0.07in}
  \caption{Ablation study of NonZero by removing various design components, including the mixing net for initializing the parameter \(\theta\) and the Hessian for curvature-aware optimization. Full NonZero shows large performance gain compared with that without any of the key modules.}
  \label{fig:ablation}
    \vspace{-0.07in}
\end{figure}

To empirically validate the effectiveness of \textsc{NonZero}, we conduct extensive evaluations across three distinct multi-agent reinforcement learning benchmarks: MatGame, the StarCraft Multi-Agent Challenge (SMAC)~\cite{samvelyan19smac}, and SMACv2~\cite{ellis2023smacv2}. Specifically, MatGame serves as a fundamental testbed that generalizes the normal-form setting to $n$ agents, where a shared reward is derived by querying a predefined payoff tensor for simultaneous discrete actions. We compare \textsc{NonZero} against a comprehensive suite of state-of-the-art baselines, stratified into model-based and model-free categories. The model-based baselines include MAZero~\cite{MAZero}, its prior-free variant MAZero-NP, and a multi-agent adaptation of MuZero (denoted as MA-AlphaZero). Furthermore, we benchmark against leading model-free algorithms, specifically MAPPO~\cite{mappo} and QMIX~\cite{QMIXmixmab}, to demonstrate the superiority of our approach.

\paragraph{Experiment setting.}
All experiments are conducted on a cluster equipped with NVIDIA RTX A6000. We implement a training framework following EfficientZero~\cite{efficientzero} to decouple interaction, reanalysis, and learning stages. The hyperparameters for the MCTS are adjusted based on environment difficulty: for MatGame, we employ a budget of 50 simulations and sample 3 actions per node; for the SMAC and SMACv2 tasks, these parameters are scaled to 100 simulations and 7 sampled actions to accommodate the higher dimensional action space. The details of baselines and benchmarks could be found in Appendix \ref{appendix_C} and Appendix \ref{appendix_D}.

\paragraph{Performance evaluation.}
\textsc{NonZero} consistently outperforms all baselines across the 8 MatGame configurations reported in Table~\ref{tab:table_1}. While the gains are modest in lower-dimensional settings, the performance margin widens significantly as problem complexity scales, reaching an improvement of approximately 14\% in the 8-agent scenarios with a \(10^8\) joint action space. This trend supports the conclusion that projecting high-dimensional returns into a low-dimensional space effectively mitigates the curse of dimensionality. Crucially, \textsc{NonZero} excels in nonlinear reward structures where standard methods struggle; by leveraging curvature information via second-order estimation, our approach avoids the local-optima stagnation common in global-optimism-based baselines. Furthermore, this exploration efficiency is achieved without prohibitive computational costs due to the action-dimension-free nature of the underlying bandit solver.

Figure \ref{fig:smac} illustrates the comparative performance on three representative SMAC maps. \textsc{NonZero} establishes state-of-the-art among the compared baselines, achieving a win rate exceeding 96\% across all tasks while consistently outperforming both model-based (MAZero family) and model-free (MAPPO, QMIX) baselines. Beyond asymptotic performance, \textsc{NonZero} demonstrates superior sample efficiency; it converges to optimal policies requiring 50\% to 70\% fewer environmental steps than its closest competitor, MAZero. This corresponds to a $2\text{-}3\times$ acceleration in training dynamics. We attribute this efficiency to the \textsc{NonUCT} mechanism, which exploits the curvature of the latent reward landscape to guide the search. Unlike standard baselines that may stall in local optima due to the combinatorial explosion of the joint action space, \textsc{NonZero} effectively navigates the global geometry, rapidly identifying high-value coordination strategies.

The SMACv2 benchmark introduces significant stochasticity through randomized start positions and heterogeneous unit compositions, challenging the generalization capability of MARL algorithms. As depicted in Figure \ref{fig:smacv2}, the performance disparity between \textsc{NonZero} and the baselines is further amplified in this setting. \textsc{NonZero} exhibits remarkable robustness, nearly doubling the win rates on complex scenarios such as \texttt{protoss\_5\_vs\_5} and \texttt{zerg\_5\_vs\_5} compared to the best-performing baselines. While standard algorithms struggle to adapt to the highly non-stationary opponent formations, \textsc{NonZero}'s non-linear parameterization allows for adaptive modeling of diverse unit interactions. This confirms that capturing high-order dependencies via the learned parameter $\theta$ is crucial for effective generalization in multi-agent environments.

\paragraph{Ablation study.}
To rigorously assess the contribution of the core components within \textsc{NonZero}, we conduct an ablation analysis by selectively removing the mixing net and the curvature-aware optimization mechanism. We perform this evaluation on two MatGame maps to test performance under varying complexity. As illustrated in Figure~\ref{fig:ablation}, we compare the full \textsc{NonZero} model against three variants: (1) \textit{w/o Mixing Net}, which removes the hypernetwork-based parameter initialization; (2) \textit{w/o Curvature}, which falls back to first-order gradient updates without second-order estimation; and (3) \textit{w/o Both}, which excludes both modules.

The results unequivocally demonstrate that both components are indispensable. The variant \textit{w/o Both} exhibits the poorest performance, failing to achieve effective coordination. Notably, removing the curvature information results in a more severe performance degradation than removing the mixing net. This empirically validates our theoretical assertion: while the mixing net provides a valuable structural prior for initializing $\theta$, the curvature-aware optimization is the primary driver for navigating the non-linear reward landscape. The full \textsc{NonZero} model effectively integrates both mechanisms to achieve superior nonlinear modeling capability and sample efficiency.
 \section{Conclusion}
We proposed \textsc{NonZero}, a scalable multi-agent MCTS framework that avoids enumerating the exponential joint-action space $|\mathcal{A}|=d^n$ by maintaining a small per-node candidate set and proposing new joint actions using discrete first-order and mixed second-order differences. We model joint-action returns with an \(\operatorname{asinh}\)-GLM surrogate and cast candidate proposal as a nonlinear bandit subproblem, enabling curvature-aware exploration through mixed second differences rather than continuous action gradients. Under our modeling assumptions, we establish a sublinear regret guarantee whose sample complexity does not scale with $|\mathcal{A}|$, and empirically demonstrate improved performance and sample efficiency on MatGame, SMAC, and SMACv2 compared to strong model-based and model-free baselines.



\section*{Impact Statement}

This paper presents work whose goal is to advance the field of Machine
Learning. There are many potential societal consequences of our work, none which we feel must be specifically highlighted here.

\bibliography{example_paper}
\bibliographystyle{icml2026}
\newpage
\appendix
\onecolumn
\section{Proofs}
\label{appendix_A}

\subsection{Justification of Assumption \ref{ass:smoothness}}
\label{sec:justification_assumption}

In this section, we justify the existence of the smoothness constants $\zeta_1, \zeta_2, \zeta_3$ by modeling the discrete reward function $\eta(\theta, a)$ as a restriction of a smooth function operating on a continuous latent embedding space. This is a standard theoretical framework in discrete optimization and representation learning.

\paragraph{Continuous Latent Representation.}
Let $\phi: \mathcal{A} \to \mathcal{Z} \subseteq \mathbb{R}^d$ be an embedding function mapping discrete joint actions to a continuous compact domain $\mathcal{Z}$. We assume the reward function admits a decomposition:
\begin{equation}
    \eta(\theta, a) = F(\theta, \phi(a)),
\end{equation}
where $F(\theta, \cdot): \mathbb{R}^d \to \mathbb{R}$ is a $C^3$-smooth function (e.g., a neural network with smooth activation functions like Softplus, SiLU, or Asinh).

\paragraph{Relation between Discrete Difference and Continuous Gradients.}
For any agent index $i$ and action change $u$ (moving from $a$ to $a^{(u)}$), let $\delta_u \triangleq \phi(a^{(u)}) - \phi(a)$ be the displacement vector in the embedding space.
The discrete difference operator $\Delta_u$ is linked to the directional derivative via the Mean Value Theorem.

\paragraph{I. Bound on First-Order Difference ($\zeta_1$).}
By the first-order Taylor expansion with Lagrangian remainder:
\begin{equation}
    \Delta_u \eta(\theta, a) = F(\theta, \phi(a) + \delta_u) - F(\theta, \phi(a)) = \nabla_z F(\theta, \xi)^T \delta_u,
\end{equation}
where $\xi \in [\phi(a), \phi(a^{(u)})]$.
Assuming $F$ is $L_1$-Lipschitz continuous with respect to the embedding:
\begin{equation}
    |\Delta_u \eta(\theta, a)| \le \sup_{z \in \mathcal{Z}} \|\nabla_z F(\theta, z)\|_2 \cdot \max_{a, u} \|\delta_u\|_2.
\end{equation}
Defining $\zeta_1 \triangleq L_1 \cdot D_{\max}$ where $D_{\max}$ is the maximum embedding distance, the bound holds:
\begin{equation}
    |\Delta_u \eta(\theta, a)| \le \zeta_1.
\end{equation}

\paragraph{II. Bound on Second-Order Difference ($\zeta_2$).}
The second-order difference $\Delta^2_{u,v}$ captures the interaction between agent $u$ and agent $v$.
\begin{align}
    \Delta^2_{u,v} \eta(\theta, a) &= \Delta_v (\Delta_u \eta(\theta, a)) \nonumber \\
    &= \eta(\theta, a^{(u,v)}) - \eta(\theta, a^{(u)}) - \eta(\theta, a^{(v)}) + \eta(\theta, a).
\end{align}
Using the second-order Taylor expansion for multivariate functions:
\begin{equation}
    F(z + \delta_u + \delta_v) - F(z + \delta_u) - F(z + \delta_v) + F(z) = \delta_u^T \nabla^2_z F(\theta, \xi') \delta_v,
\end{equation}
where $\nabla^2_z F$ is the Hessian matrix and $\xi'$ is an intermediate point.
If the spectral norm of the Hessian is bounded by $L_2$ (i.e., $F$ is $L_2$-smooth):
\begin{equation}
    |\Delta^2_{u,v} \eta(\theta, a)| \le \|\delta_u\|_2 \|\nabla^2_z F(\theta, \xi')\|_2 \|\delta_v\|_2 \le L_2 D_{\max}^2.
\end{equation}
Thus, we set $\zeta_2 \triangleq L_2 D_{\max}^2$. This directly reflects the bounded pairwise interaction strength in the system.

\paragraph{III. Bound on Third-Order Difference ($\zeta_3$).}
The term $|\Delta^2_{u,v}\eta(\theta,a^{(w)})-\Delta^2_{u,v}\eta(\theta,a)|$ represents the variation of the Hessian (interaction strength) when the state changes by direction $w$.
This corresponds to the third-order directional derivative (involving the tensor $\nabla^3 F$).
By the higher-order Mean Value Theorem:
\begin{equation}
    \Delta_w (\Delta^2_{u,v} \eta(\theta, a)) = \nabla^3_z F(\theta, \xi'')[\delta_u, \delta_v, \delta_w],
\end{equation}
where $[\cdot, \cdot, \cdot]$ denotes the tensor-vector product.
Assuming the Hessian is $L_3$-Lipschitz (bounded third derivatives):
\begin{equation}
    |\Delta^3_{u,v,w} \eta(\theta, a)| \le \sup_{z} \|\nabla^3_z F(\theta, z)\|_{op} \cdot \|\delta_u\| \|\delta_v\| \|\delta_w\|.
\end{equation}
Setting $\zeta_3 \triangleq L_3 D_{\max}^3$, the assumption is satisfied.

The constants $\zeta_1, \zeta_2, \zeta_3$ are theoretically guaranteed to exist and are finite provided that:
\begin{enumerate}
    \item The reward function $\eta$ admits a smooth parameterization (e.g., $asinh$-GLM or Neural Networks).
    \item The embedding space $\mathcal{Z}$ is compact.
    \item The derivatives of the link function/activation function are bounded.
\end{enumerate}
This confirms that Assumption \ref{ass:smoothness} is a direct consequence of standard smoothness properties in the continuous latent space.

\subsection{Proof of Lemma \ref{lemma:geometric}}
\label{proof:lemma_geometric}

\textbf{Lemma \ref{lemma:geometric} (Restated).} \textit{Let $f(a) = \tilde{\eta}(\theta^*, a)$. Provided $a_t \notin \mathcal{A}_{\epsilon, 6\sqrt{\zeta_{3rd}}\epsilon}$, the update yields:}
\begin{equation}
    f(a_{t+1}) - f(a_t) \ge \nu(\epsilon) - C_1 \cdot \xi_{t+1},
\end{equation}
\textit{where $\nu(\epsilon) \triangleq c_1 \min \left\{ \frac{\epsilon^2}{\zeta_h+1}, \frac{\epsilon^{3/2}}{\zeta_{3rd}^{1/2}} \right\}$ and $C_1 = 2 + \zeta_g/\zeta_h$.}

\begin{proof}
Let $z_t = \phi(a_t)$. Define the true gradient $g_t \triangleq \nabla_z f(z_t)$ and the surrogate gradient $\hat{g}_t \triangleq \nabla_z \hat{f}_t(z_t)$.
Define the estimation error vector $e_t \triangleq \hat{g}_t - g_t$. By the loss definition, $\|e_t\|^2 \le \xi_{t+1}$.
We analyze the update $z_{t+1} = z_t + d_t$ in the latent space.

\paragraph{Case I: First-Order Dominance ($\|g_t\| > \epsilon$).}
The algorithm performs an approximate steepest ascent step using the surrogate gradient: $d_t = \frac{1}{\zeta_h} \hat{g}_t = \frac{1}{\zeta_h}(g_t + e_t)$.
Applying the Descent Lemma to the $\zeta_h$-smooth function $f$, we expand the reward increment:
\begin{align}
    f(a_{t+1}) - f(a_t) 
    &\ge \langle g_t, d_t \rangle - \frac{\zeta_h}{2} \|d_t\|^2 \nonumber \\
    &= \frac{1}{\zeta_h} \langle g_t, g_t + e_t \rangle - \frac{\zeta_h}{2} \left\| \frac{g_t + e_t}{\zeta_h} \right\|^2 \nonumber \\
    &= \frac{1}{\zeta_h} \left( \|g_t\|^2 + \langle g_t, e_t \rangle \right) - \frac{1}{2\zeta_h} \left( \|g_t\|^2 + 2\langle g_t, e_t \rangle + \|e_t\|^2 \right) \nonumber \\
    &= \left( \frac{1}{\zeta_h} - \frac{1}{\zeta_h} \right) \langle g_t, e_t \rangle + \frac{1}{\zeta_h} \|g_t\|^2 - \frac{1}{2\zeta_h} \|g_t\|^2 - \frac{1}{2\zeta_h} \|e_t\|^2 \nonumber \\
    &= \frac{1}{2\zeta_h} \|g_t\|^2 - \frac{1}{2\zeta_h} \|e_t\|^2.
\end{align}
\textit{Remark:} The exact cancellation of the linear error term $\langle g_t, e_t \rangle$ is due to the optimality of the step size $1/\zeta_h$ for the surrogate.
Substituting $\|g_t\| > \epsilon$ and $\|e_t\|^2 \le \xi_{t+1}$:
\begin{equation}
    \label{eq:proof_case1}
    f(a_{t+1}) - f(a_t) \ge \frac{\epsilon^2}{2\zeta_h} - \frac{1}{2\zeta_h}\xi_{t+1} \ge \frac{\epsilon^2}{2(\zeta_h+1)} - \frac{1}{2\zeta_h}\xi_{t+1}.
\end{equation}

\paragraph{Case II: Second-Order Dominance (Cubic Regime).}
Condition: $\|g_t\| \le \epsilon$ and $\lambda_{\min}(\nabla^2 f) < -6\sqrt{\zeta_{3rd}}\epsilon$.
The update follows the cubic regularization of the surrogate. Standard analysis \cite{nesterov2006cubic} on a manifold with $\zeta_{3rd}$-Lipschitz Hessian guarantees that the optimal step $d^*_{\hat{f}}$ for the surrogate satisfies:
\begin{equation}
    \hat{f}(z_t + d^*_{\hat{f}}) - \hat{f}(z_t) \ge c_{cubic} \frac{|\lambda_{\min}(\nabla^2 \hat{f})|^3}{\zeta_{3rd}^2} \approx O(\epsilon^{3/2}).
\end{equation}
Transferring this bound to the true function $f$ involves the Hessian estimation error. Crucially, since $d^*_{\hat{f}}$ is a stationary point of the cubic model of $\hat{f}$, the first-order error terms vanish, leaving only second-order degradation:
\begin{equation}
    \label{eq:proof_case2}
    f(a_{t+1}) - f(a_t) \ge c_{cubic} \frac{\epsilon^{3/2}}{\sqrt{\zeta_{3rd}}} - C_{curv} \cdot \|e_t\|^2 \ge c_{cubic} \frac{\epsilon^{3/2}}{\sqrt{\zeta_{3rd}}} - C_{curv} \cdot \xi_{t+1}.
\end{equation}

\paragraph{Synthesis.}
We unify Eq.~\eqref{eq:proof_case1} and Eq.~\eqref{eq:proof_case2}.
Define the unified coefficient $c_1 \triangleq \min \{ 1/2, c_{cubic} \}$ and the error coefficient $C_{opt} \triangleq \max \{ \frac{1}{2\zeta_h}, C_{curv} \}$.
The lower bound becomes:
\begin{align}
    f(a_{t+1}) - f(a_t) 
    &\ge c_1 \min \left\{ \frac{\epsilon^2}{\zeta_h+1}, \frac{\epsilon^{3/2}}{\sqrt{\zeta_{3rd}}} \right\} - C_{opt} \cdot \xi_{t+1} \nonumber \\
    &= \nu(\epsilon) - C_{opt} \cdot \xi_{t+1}.
\end{align}
Finally, we verify the constant $C_1$. The Lemma defines $C_1 = 2 + \zeta_g/\zeta_h$.
Assuming normalized smoothness $\zeta_h \ge 0.5$, we have $C_1 > 2 > \frac{1}{2\zeta_h}$. Similarly, for standard cubic regularization constants, $C_1$ is a loose upper bound for $C_{curv}$.
Subtracting a larger error term tightens the inequality (valid \textit{a fortiori}):
\begin{equation}
    f(a_{t+1}) - f(a_t) \ge \nu(\epsilon) - C_1 \cdot \xi_{t+1}.
\end{equation}
This concludes the proof.
\end{proof}

\subsection{Proof of Lemma \ref{lemma:statistical}}
\label{proof:lemma_statistical_rigorous}

\paragraph{Notations and Problem Setting.}
Let $\mathcal{Z} = \mathcal{X} \times \mathcal{Y}$ be the data space. We define the hypothesis class $\mathcal{F} = \{ f_\theta : \theta \in \Theta \}$ with $\Theta \subset \mathbb{R}^d$.
The algorithm generates a sequence of hypotheses $f_1, \dots, f_T$ adapted to the filtration $\mathfrak{F}_{t-1} = \sigma((x_1, y_1), \dots, (x_{t-1}, y_{t-1}))$.
The instantaneous loss is $\ell_t(f) = \sum_{k=1}^4 (\mathcal{D}_k f(x_t) - y_{t,k})^2$, where $\mathcal{D}_k$ are linear difference operators.
Let $\xi_t = \ell_t(f_t)$ be the realized prediction error.
We assume bounded targets and predictions, implying $\ell_t(f) \in [0, B^2]$ almost surely.

\begin{proof}
The proof proceeds in three steps: (1) Martingale concentration to relate realized error to expected risk; (2) Symmetrization to introduce Sequential Rademacher Complexity; (3) Structural decomposition of the complexity for the NonZero architecture.

\paragraph{Step 1: Martingale Concentration and Online-to-Batch Conversion. }
We aim to bound the cumulative error $S_T = \sum_{t=1}^T \xi_t$.
Let $R_t(f) = \mathbb{E}_{z_t \sim \mathcal{D}_t}[\ell(f, z_t) \mid \mathfrak{F}_{t-1}]$ be the conditional expected risk.
Define the martingale difference sequence $M_t = \xi_t - R_t(f_t)$. Since $|\xi_t| \le B^2$, by the Azuma-Hoeffding inequality, with probability $1-\delta/2$:
\begin{equation}
    \sum_{t=1}^T (\xi_t - R_t(f_t)) \le B^2 \sqrt{2T \log(2/\delta)}.
\end{equation}
Rearranging, we bound the realized error by the sum of risks:
\begin{equation}
    \label{eq:step1_azuma}
    \sum_{t=1}^T \xi_t \le \sum_{t=1}^T R_t(f_t) + \tilde{O}(\sqrt{T}).
\end{equation}
Crucially, in the absence of strong convexity (the "slow rate" regime), we cannot assume $R_t(f_t)$ decays as $1/t$. Instead, we bound the cumulative excess risk against the best comparator $f^* \in \mathcal{F}$.
Let $\mathcal{L}_T(\mathcal{F}) = \sup_{f \in \mathcal{F}} \sum_{t=1}^T (R_t(f_t) - \ell_t(f))$. This leads to the standard regret bound form:
\begin{equation}
    \sum_{t=1}^T R_t(f_t) \le \inf_{f \in \mathcal{F}} \sum_{t=1}^T \ell_t(f) + \sup_{f \in \mathcal{F}} \sum_{t=1}^T (\mathbb{E}[\ell_t(f) \mid \mathfrak{F}_{t-1}] - \ell_t(f)).
\end{equation}
The second term is bounded by the Sequential Rademacher Complexity $\mathfrak{R}_T^{seq}(\ell \circ \mathcal{F})$ via the ghost sample symmetrization technique (Rakhlin et al., 2010):
\begin{equation}
    \mathbb{E} \left[ \sup_{f \in \mathcal{F}} \sum_{t=1}^T (R_t(f) - \ell_t(f)) \right] \le 2 \mathfrak{R}_T^{seq}(\ell \circ \mathcal{F}).
\end{equation}

\paragraph{Step 2: Deriving the $T^{3/4}$ Bound.}
In the realizable setting ($\inf \text{Loss} = 0$), the cumulative error is dominated by the complexity.
However, a direct application of Azuma yields an $O(\sqrt{T})$ bound only if the complexity itself is constant. Here, the complexity grows.
We invoke the refined bound for squared loss functions from Srebro et al. (2010) and Foster et al. (2018). For non-negative smooth losses, the cumulative error is bounded by the root of the complexity times the horizon:
\begin{equation}
    \sum_{t=1}^T \xi_t \le \sqrt{T \cdot \mathcal{K}(\mathcal{F})},
\end{equation}
where $\mathcal{K}(\mathcal{F})$ is the effective capacity. Identifying $\mathcal{K}(\mathcal{F}) \approx \mathfrak{R}_T^{seq}(\ell \circ \mathcal{F})$, we rigorously establish the form stated in the Lemma:
\begin{equation}
    \label{eq:target_form}
    \sum_{t=1}^T \xi_t \le \sqrt{T \cdot \mathfrak{R}_T^{seq}(\ell \circ \mathcal{F})} + \tilde{O}(\sqrt{T}).
\end{equation}
If $\mathfrak{R}_T^{seq} \propto \sqrt{T}$, this yields $\sqrt{T \cdot T^{1/2}} = T^{3/4}$.

\paragraph{Step 3: Calculating the Complexity $\mathfrak{R}_T^{seq}(\ell \circ \mathcal{F})$.}
We now strictly derive the complexity bound using operator theory. The loss $\ell(f) = \|\mathbf{h}(f) - \mathbf{y}\|_2^2$ is $L_{\ell}$-Lipschitz wrt $\mathbf{h} = (\mathcal{D}_1 f, \dots, \mathcal{D}_4 f)$.
By the vector-valued Talagrand's Contraction Lemma:
\begin{equation}
    \mathfrak{R}_T^{seq}(\ell \circ \mathcal{F}) \le L_{\ell} \sum_{k=1}^4 \mathfrak{R}_T^{seq}(\mathcal{D}_k \circ \mathcal{F}).
\end{equation}
For any linear operator $\mathcal{D}$ (shift, difference), $\mathfrak{R}_T^{seq}(\mathcal{D} \circ \mathcal{F}) = \mathfrak{R}_T^{seq}(\{ \mathcal{D}f \mid f \in \mathcal{F} \})$.
Since $\Delta_u f(a) = f(a^{(u)}) - f(a)$, we have $\mathfrak{R}(\Delta \circ \mathcal{F}) \le 2\mathfrak{R}(\mathcal{F})$. Thus:
\begin{equation}
    \sum_{k=1}^4 \mathfrak{R}_T^{seq}(\mathcal{D}_k \circ \mathcal{F}) \le (1 + 1 + 2 + 4) \mathfrak{R}_T^{seq}(\mathcal{F}) = 8 \mathfrak{R}_T^{seq}(\mathcal{F}).
\end{equation}
Recall $f(a) = \sigma(\langle w, \phi(a) \rangle)$ with $\sigma = c \cdot \operatorname{asinh}$.
Since $\sigma$ is $c\alpha$-Lipschitz:
\begin{equation}
    \mathfrak{R}_T^{seq}(\mathcal{F}) \le c\alpha \cdot \mathfrak{R}_T^{seq}(\mathcal{F}_{lin}),
\end{equation}
where $\mathcal{F}_{lin} = \{ a \mapsto \langle w, \phi(a) \rangle : \|w\| \le W \}$.
For the linear class with bounded norms $W$ and $X$:
\begin{equation}
    \mathfrak{R}_T^{seq}(\mathcal{F}_{lin}) \le W X \sqrt{T}.
\end{equation}
Substituting back up the chain:
\begin{equation}
    \label{eq:complexity_final}
    \mathfrak{R}_T^{seq}(\ell \circ \mathcal{F}) \le \underbrace{(8 L_{\ell} c \alpha W X)}_{C_{const}} \sqrt{T} = \tilde{O}(\sqrt{T}).
\end{equation}
Substituting Eq.~\eqref{eq:complexity_final} into Eq.~\eqref{eq:target_form}:
\begin{equation}
    \sum_{t=1}^T \xi_t \le \sqrt{T \cdot C_{const}\sqrt{T}} + \tilde{O}(\sqrt{T}) = \sqrt{C_{const}} T^{3/4} + \tilde{O}(\sqrt{T}).
\end{equation}
Since $T^{3/4}$ dominates $\sqrt{T}$ for large $T$:
\begin{equation}
    \sum_{t=1}^T \xi_t \le \tilde{O}(T^{3/4}).
\end{equation}
This completes the proof.
\end{proof}

\subsection{Proof of Theorem \ref{thm:bound}}
\label{sec:proof_theorem_bound}

This proof unifies the geometric landscape analysis (Lemma \ref{lemma:geometric}) and the statistical generalization bound (Lemma \ref{lemma:statistical}) to derive the convergence rate of the NonUCT algorithm.

\textbf{1. Definitions and Problem Setup}
\begin{itemize}
    \item Let $\mathcal{A}_{\epsilon}^* \triangleq \mathcal{A}_{\epsilon, 6\sqrt{\zeta_{3rd}}\epsilon}$ be the set of $(\epsilon, \epsilon_H)$-approximate local maximizers.
    \item Define the \textit{instantaneous regret indicator} $I_t \triangleq \mathbb{I}\{ a_t \notin \mathcal{A}_{\epsilon}^* \}$.
    \item The cumulative regret (in terms of sample complexity to find a local optimum) is defined as the number of visits to non-optimal states: $\text{Regret}_T \triangleq \sum_{t=1}^T I_t$.
    \item Let $f(a) \triangleq \tilde{\eta}(\theta^*, a)$ denote the ground-truth reward function. We assume bounded rewards, i.e., $f(a) \in [0, D_f]$ for all $a \in \mathcal{A}$. Without loss of generality, we normalize $D_f = 1$.
    \item Let $\xi_{t+1}$ be the prediction error at step $t+1$, bounded in expectation by Lemma \ref{lemma:statistical}.
\end{itemize}

\begin{proof}
The derivation proceeds via a telescoping sum argument on the reward trajectory, analyzing the trade-off between guaranteed ascent and accumulated estimation error.

\paragraph{Step 1: One-Step Conditional Improvement.}
Recall Lemma \ref{lemma:geometric} (One-step Geometric Improvement). At any step $t$, if the current action $a_t$ is \textit{not} a local maximizer (i.e., $I_t = 1$), the curvature-guided update ensures a reward increase, modulo the estimation error:
\begin{equation}
    f(a_{t+1}) - f(a_t) \ge \nu(\epsilon) - C_1 \xi_{t+1}, \quad \text{if } I_t = 1.
\end{equation}
If $a_t$ is already a local maximizer ($I_t = 0$), the algorithm may explore or oscillate. In the worst case, the reward could decrease, but since we are bounding the number of steps \textit{until} convergence (or counting non-optimal steps), we focus on the contribution of $I_t$.
Combining both cases, we write the conditional inequality:
\begin{equation}
    \label{eq:single_step_cond}
    f(a_{t+1}) - f(a_t) \ge \nu(\epsilon) I_t - C_1 \xi_{t+1} - \Delta_{\text{drop}}(1-I_t),
\end{equation}
where $\Delta_{\text{drop}}$ accounts for potential value drops in optimal regions. However, for the purpose of bounding the "hitting time" or the regret defined as visits to suboptimal states, we rearrange to isolate the positive indicator $I_t$:
\begin{equation}
    \nu(\epsilon) I_t \le \left( f(a_{t+1}) - f(a_t) \right) + C_1 \xi_{t+1} + \Delta_{\text{drop}}(1-I_t).
\end{equation}
\textit{Remark:} In standard non-convex analysis (e.g., finding stationary points), we bound the sum of indicators. The term $(1-I_t)$ corresponds to steps where we are already in the target set $\mathcal{A}_{\epsilon}^*$, contributing 0 to the specific definition of Regret used in "finding" problems. Thus, we focus on summing the inequality over the trajectory.

\paragraph{Step 2: Telescoping Sum over Trajectory.}
Summing Eq.~\eqref{eq:single_step_cond} from $t=1$ to $T$:
\begin{equation}
    \nu(\epsilon) \sum_{t=1}^T I_t \le \sum_{t=1}^T \left( f(a_{t+1}) - f(a_t) \right) + C_1 \sum_{t=1}^T \xi_{t+1}.
\end{equation}
The first term on the RHS is a telescoping sum:
\begin{equation}
    \sum_{t=1}^T \left( f(a_{t+1}) - f(a_t) \right) = f(a_{T+1}) - f(a_1) \le \max_a f(a) - \min_a f(a) \le D_f = 1.
\end{equation}
Substituting this back, we obtain a path-wise bound:
\begin{equation}
    \nu(\epsilon) \cdot \text{Regret}_T \le 1 + C_1 \sum_{t=1}^T \xi_{t+1}.
\end{equation}

\paragraph{Step 3: Integrating Statistical Complexity.}
We take the expectation of both sides with respect to the algorithm's randomness and data generation process. By the linearity of expectation:
\begin{equation}
    \nu(\epsilon) \cdot \mathbb{E}[\text{Regret}_T] \le 1 + C_1 \mathbb{E}\left[ \sum_{t=1}^T \xi_{t+1} \right].
\end{equation}
We invoke Lemma \ref{lemma:statistical} (Estimation Error Control), which states that the cumulative error is bounded by the sequential Rademacher complexity:
\begin{equation}
    \mathbb{E}\left[ \sum_{t=1}^T \xi_{t+1} \right] \le \sqrt{4 T \mathcal{R}_T}.
\end{equation}
(Note: The factor $4$ accounts for the four components of the supervision signal $y_t \in \mathbb{R}^4$ in the vector-valued Talagrand contraction).
Substituting this bound:
\begin{equation}
    \nu(\epsilon) \cdot \mathbb{E}[\text{Regret}_T] \le 1 + C_1 \sqrt{4 T \mathcal{R}_T}.
\end{equation}

\paragraph{Step 4: Introducing the Landscape Complexity $\mathcal{K}(\epsilon)$.}
We define the complexity term $\mathcal{K}(\epsilon)$ as the inverse of the guaranteed improvement rate $\nu(\epsilon)$.
Recall from Lemma \ref{lemma:geometric}:
\begin{equation}
    \nu(\epsilon) = c_1 \min \left\{ \frac{\epsilon^2}{\zeta_h+1}, \frac{\epsilon^{3/2}}{\zeta_{3rd}^{1/2}} \right\}.
\end{equation}
Inverting this expression implies that the required steps scale with the maximum of the inverses:
\begin{equation}
    \mathcal{K}(\epsilon) \triangleq \frac{1}{\nu(\epsilon)} = \frac{1}{c_1} \max \left\{ \frac{\zeta_h+1}{\epsilon^2}, \frac{\sqrt{\zeta_{3rd}}}{\epsilon^{3/2}} \right\}.
\end{equation}
Absorbing the constant $1/c_1$ into the notation (or setting $c_1$ appropriately), we recover the definition in the Theorem statement:
\begin{equation}
    \mathcal{K}(\epsilon) = \max \left( 4\zeta_h \epsilon^{-2}, \sqrt{\zeta_{3rd}} \epsilon^{-3/2} \right).
\end{equation}

\paragraph{Step 5: Final Bound Derivation.}
Dividing the inequality from Step 3 by $\nu(\epsilon)$ (equivalently, multiplying by $\mathcal{K}(\epsilon)$):
\begin{align}
    \mathbb{E}[\text{Regret}_T] &\le \frac{1}{\nu(\epsilon)} \left( 1 + C_1 \sqrt{4 T \mathcal{R}_T} \right) \nonumber \\
    &= \mathcal{K}(\epsilon) \left( 1 + C_1 \sqrt{4 T \mathcal{R}_T} \right) \nonumber \\
    &= (1 + C_1 \sqrt{4 T \mathcal{R}_T}) \cdot \mathcal{K}(\epsilon).
\end{align}
This completes the proof. The regret scales linearly with the landscape difficulty $\mathcal{K}(\epsilon)$ and sub-linearly with the time horizon $T$ (via the $\sqrt{T}$ term in $\mathcal{R}_T$), ensuring convergence to the set of approximate local maximizers.
\end{proof}

\subsection{Proof of Corollary \ref{order_of_regret}}
\label{proof:cor_regret_order}

\textbf{Prerequisites.}
From Theorem \ref{thm:bound}, the expected cumulative regret is bounded by:
\begin{equation}
    \label{eq:base_thm}
    \mathbb{E}[\text{Regret}_T] \le (1 + C_1 \sqrt{4 T \mathcal{R}_T}) \cdot \mathcal{K}(\epsilon).
\end{equation}
From Lemma \ref{lemma:statistical} (Proof Step 3), the Sequential Rademacher Complexity scales as:
\begin{equation}
    \label{eq:rademacher_scaling}
    \mathcal{R}_T \triangleq \mathfrak{R}_T^{seq}(\ell \circ \mathcal{F}) \le C_{\text{stat}} \sqrt{T},
\end{equation}
where $C_{\text{stat}} = 8 L_{\ell} c \alpha W X$ is a problem-dependent constant independent of $T$.

\begin{proof}
The proof follows by direct substitution and dominance analysis.

\paragraph{Step 1: Substitution of Complexity Bound.}
Substituting Eq.~\eqref{eq:rademacher_scaling} into Eq.~\eqref{eq:base_thm}:
\begin{align}
    \mathbb{E}[\text{Regret}_T] 
    &\le \mathcal{K}(\epsilon) + C_1 \mathcal{K}(\epsilon) \sqrt{4 T \cdot C_{\text{stat}} \sqrt{T}} \nonumber \\
    &= \mathcal{K}(\epsilon) + C_1 \mathcal{K}(\epsilon) \sqrt{4 C_{\text{stat}}} \cdot \sqrt{T \cdot T^{1/2}} \nonumber \\
    &= \mathcal{K}(\epsilon) + \left( 2 C_1 \sqrt{C_{\text{stat}}} \mathcal{K}(\epsilon) \right) \cdot T^{3/4}.
\end{align}

\paragraph{Step 2: Aggregation of Constants.}
Let $\mathcal{C}_{\text{geo}} \triangleq \mathcal{K}(\epsilon)$ denote the geometric complexity constant (dependent on $\epsilon^{-2}, \zeta_h$).
Let $\mathcal{C}_{\text{total}} \triangleq 2 C_1 \sqrt{C_{\text{stat}}} \mathcal{C}_{\text{geo}}$ denote the aggregate coefficient.
The bound simplifies to:
\begin{equation}
    \mathbb{E}[\text{Regret}_T] \le \mathcal{C}_{\text{geo}} + \mathcal{C}_{\text{total}} \cdot T^{3/4}.
\end{equation}

\textbf{Step 3: Asymptotic Analysis.}
For sufficiently large $T$, the term $T^{3/4}$ dominates the constant term $\mathcal{C}_{\text{geo}}$.
We utilize the Soft-O notation $\tilde{O}(\cdot)$ to suppress polylogarithmic factors and constants independent of the horizon $T$.
Since $\epsilon, \zeta_h, \zeta_{3rd}$ are fixed parameters of the problem instance and target precision:
\begin{align}
    \lim_{T \to \infty} \frac{\mathbb{E}[\text{Regret}_T]}{T^{3/4}} 
    &\le \lim_{T \to \infty} \left( \frac{\mathcal{C}_{\text{geo}}}{T^{3/4}} + \mathcal{C}_{\text{total}} \right) \nonumber \\
    &= \mathcal{C}_{\text{total}}.
\end{align}
Thus, strictly bounding the order with respect to $T$:
\begin{equation}
    \mathbb{E}[\text{Regret}_T] = \tilde{O}(T^{3/4}).
\end{equation}
This confirms the sub-linear convergence rate characteristic of non-convex optimization under the "slow rate" statistical regime.
\end{proof}

\subsection{Proof of Theorem \ref{efficiency_separation}}
\label{proof:thm_efficiency_separation}

\textbf{Definitions and Notations.}
Let $\mathcal{A} = \prod_{i=1}^n \mathcal{A}_i$ be the joint action space with cardinality $K \triangleq |\mathcal{A}| = d^n$.
We denote the class of unstructured reward functions by $\mathcal{E}_{\text{flat}} = \{ f: \mathcal{A} \to [0,1] \}$.
We denote the class of $asinh$-GLM reward functions (NonZero hypothesis) by $\mathcal{E}_{\text{struct}} \subset \mathcal{E}_{\text{flat}}$.
The sample complexity $T_{\mathfrak{A}}(\epsilon, \delta)$ of an algorithm $\mathfrak{A}$ is defined as the minimal stopping time $\tau$ satisfying:
\begin{equation}
    \mathbb{P}_{\mathfrak{A}, f} \left[ f(a^*) - f(a_{\text{out}}) \le \epsilon \right] \ge 1 - \delta, \quad \forall f \in \mathcal{E}.
\end{equation}

\begin{proof}
The proof derives the minimax lower bound for $\mathcal{E}_{\text{flat}}$ and contrasts it with the problem-dependent upper bound for $\mathcal{E}_{\text{struct}}$.

\paragraph{Step 1: Minimax Lower Bound for Unstructured UCB.}
Any generic UCB algorithm treating actions as independent arms faces the minimax lower bound for Multi-Armed Bandits (MAB). We utilize the standard hypothesis testing construction.
Construct a set of $K$ hard instances $\{ \nu_1, \dots, \nu_K \} \subset \mathcal{E}_{\text{flat}}$ such that in instance $i$, action $a_i$ is optimal with mean $\mu + \Delta$, while all $a_{j \neq i}$ have mean $\mu$.
By the Bretagnolle-Huber inequality applied to the canonical bandit model (Lattimore \& Szepesvári, 2020), distinguishing these hypotheses requires sufficient information gain.
The minimax sample complexity is lower-bounded by the sum of inverse squared gaps:
\begin{equation}
    \inf_{\mathfrak{A}} \sup_{f \in \mathcal{E}_{\text{flat}}} \mathbb{E}[T_{\mathfrak{A}}(\epsilon)] \ge c_1 \sum_{a \neq a^*} \Delta^{-2} \approx c_1 \frac{K}{\epsilon^2}.
\end{equation}
Substituting $K = d^n$, we obtain the requisite lower bound for standard UCB:
\begin{equation}
    \label{eq:lower_bound_explicit}
    T_{\text{UCB}}(d, \epsilon) \ge \Omega\left( d^n \epsilon^{-2} \right).
\end{equation}

\paragraph{Step 2: Structural Upper Bound for NonUCT.}
We convert the cumulative regret bound derived in Theorem \ref{thm:bound} into a sample complexity bound.
From Theorem \ref{thm:bound} and Lemma \ref{lemma:statistical}, the cumulative regret scales as:
\begin{equation}
    \mathbb{E}[\text{Regret}_T] \le (1 + C_1 \sqrt{4T \mathcal{R}_T}) \mathcal{K}(\epsilon).
\end{equation}
Substituting the explicit Rademacher complexity $\mathcal{R}_T \le C_{\text{model}} \sqrt{nd} \sqrt{T}$ (where $\sqrt{nd}$ arises from the linear class complexity $W X \approx \sqrt{nd}$):
\begin{align}
    \mathbb{E}[\text{Regret}_T] 
    &\le C_{\text{const}} \cdot \mathcal{K}(\epsilon) \cdot \sqrt{T \cdot \sqrt{nd} \cdot \sqrt{T}} \nonumber \\
    &= C_{\text{const}} \cdot \mathcal{K}(\epsilon) \cdot (nd)^{1/4} \cdot T^{3/4}.
\end{align}
Let the output policy be the uniform mixture over the trajectory. The expected sub-optimality is bounded by the average regret:
\begin{equation}
    \mathbb{E}[\text{Gap}] \le \frac{\mathbb{E}[\text{Regret}_T]}{T} \le C_{\text{const}} \mathcal{K}(\epsilon) (nd)^{1/4} T^{-1/4}.
\end{equation}
Setting the RHS to $\epsilon$ and solving for $T$:
\begin{equation}
    T^{-1/4} \le \frac{\epsilon}{C_{\text{const}} \mathcal{K}(\epsilon) (nd)^{1/4}} \implies T \ge \left( \frac{C_{\text{const}} \mathcal{K}(\epsilon)}{\epsilon} \right)^4 (nd).
\end{equation}
Thus, the sample complexity for NonUCT scales linearly with the dimension $nd$:
\begin{equation}
    \label{eq:upper_bound_explicit}
    T_{\text{NonUCT}}(d, \epsilon) \le \text{poly}(\epsilon^{-1}) \cdot \mathcal{O}(nd).
\end{equation}

\paragraph{Step 3: Derivation of the Separation Ratio $\zeta_{\text{sep}}$.}
We define the ratio using Eq.~\eqref{eq:lower_bound_explicit} and Eq.~\eqref{eq:upper_bound_explicit}:
\begin{equation}
    \zeta_{\text{sep}} \triangleq \frac{T_{\text{UCB}}}{T_{\text{NonUCT}}} \ge \frac{c_L \cdot d^n \cdot \epsilon^{-2}}{C_U \cdot nd \cdot \epsilon^{-4} \cdot \mathcal{K}(\epsilon)^4}.
\end{equation}
We isolate the dimensionality terms from the precision terms:
\begin{equation}
    \zeta_{\text{sep}} \ge \mathcal{C}(\epsilon) \cdot \frac{d^n}{nd}, \quad \text{where } \mathcal{C}(\epsilon) = \frac{c_L}{C_U} \epsilon^2 \mathcal{K}(\epsilon)^{-4}.
\end{equation}
We expand the term $d^n$ using the exponential identity $x = e^{\ln x}$:
\begin{equation}
    d^n = \exp(n \ln d) = \exp\left( nd \cdot \frac{\ln d}{d} \right).
\end{equation}
Let $c(d) \triangleq \frac{\ln d}{d}$. For discrete action spaces $d \ge 2$, $c(d) \ge \frac{\ln 2}{2} > 0$.
Thus:
\begin{equation}
    \frac{d^n}{nd} = \frac{\exp(c(d) \cdot nd)}{nd}.
\end{equation}
Since the exponential term dominates the linear denominator $\text{poly}(nd)$, we conclude:
\begin{equation}
    \zeta_{\text{sep}} \ge \frac{\exp(c \cdot nd)}{\text{poly}(nd, \epsilon^{-1})}.
\end{equation}
This formally proves the exponential separation in sample complexity.
\end{proof}

\section{Implementation Details}
\label{appendix_B}
\subsection{Model Architecture}
The proposed \textsc{NonZero} framework is constructed using six distinct neural network modules: the representation function \(h\), the communication function \(e\), the dynamic function \(g\), the reward function \(r\), the value function \(v\), and the policy function \(p\). We define the relevant variables for each agent \(i\) as follows: \(s_{t,k}^i\) represents the latent state, \(a^i_{t+k}\) denotes the action, \(e^i_{t,k}\) indicates the cooperative feature, and \(p^i_{t,k}\) is the policy prediction. The indices \(t\) and \(k\) refer to the real-world interaction step and the internal simulation rollout step, respectively. Additionally, \(r_{t,k}\) and \(v_{t,k}\) correspond to the predicted global reward and value. We also apply a hyper net-based mixing function \(m\) to predict the initialization of \(\theta\) with the input of the node state.

The workflow begins with the representation function \(s^i_{t,0}=h(o^i_{\le t})\), which encodes the individual observation history \(o^i_{\le t}\) into a latent embedding, thereby enabling planning without direct access to the underlying environmental rules. To facilitate coordination, the communication function \(\{e^i_{t,k}\}_{i:1,\dots,n}=e\left(\{e^i_{t,k}\}_{i:1,\dots,n}, \{a^i_{t+k}\}_{i:1,\dots,n}\right)\) leverages an attention mechanism to synthesize cooperative features for every agent, taking the set of individual states and actions as input. State transitions are predicted by the dynamic function \(s^i_{t,k+1}=g(s^i_{t,k},a^i_{t+k},e^i_{t,k})\). For evaluation, the reward function \(r_{t,k}=r\left(\{e^i_{t,k}\}_{i:1,\dots,n}, \{a^i_{t+k}\}_{i:1,\dots,n}\right)\) and the value function \(v_{t,k}=v\left(\{e^i_{t,k}\}_{i:1,\dots,n}\right)\) estimate the reward for the global state-action pair and the value of the global state, respectively. The policy function \(p^i_{t,k}=p(s^i_{t,k})\) generates the policy distribution for each agent based on its current individual latent state. And the mixing function predicts the parameter \(\theta_{t,k}\) via \(\theta_{t,k}=m\left(\{s^i_{t,k}\}_{i:1,\dots,n}\right)\).

With the exception of the communication function \(e\), all neural network components are instantiated as Multi-Layer Perceptron (MLP) architectures. Within these MLPs, each linear transformation is immediately followed by Layer Normalization (LN) and a Rectified Linear Unit (ReLU) activation function. Across the three benchmarks evaluated in the experiments, the input observations are processed as 1-dimensional vectors, mapped to a hidden state dimension of 128. To mitigate partial observability, the representation network \(h\) aggregates the four most recent local observations as input for each agent. Prior to this representation step, an LN layer is applied to normalize the raw observation features.

To prevent gradient vanishing during the sequential unrolling of the latent dynamics, the dynamic function \(g\) incorporates a residual connection linking the current hidden state to the next. Furthermore, following the methodology of MuZero, we employ a categorical representation for value and reward targets, utilizing the invertible scaling transform \(f(x)=\operatorname{sign}(x)\sqrt{1+x}-1+0.001x\) to stabilize predictions.

The specific depth and configuration of the hidden layers for the MLP modules are detailed as follows:
\begin{itemize}
    \item Representation function \(h\): \([128, 128]\).
    \item Dynamic function \(g\): \([128, 128]\).
    \item Reward function \(r\), Value function \(v\), and Policy function \(p\): \([32]\).
    \item Mixing function \(m\): \([64, 64]\).
\end{itemize}

\subsection{Training Details}

Our training infrastructure follows an asynchronous distributed paradigm inspired by EfficientZero \cite{efficientzero}, effectively decoupling the concurrent processes of trajectory generation, policy reanalysis, and network optimization. To maintain high throughput, distinct worker groups are allocated to manage these parallel tasks independently. Furthermore, we align our objective formulation—specifically the advantage estimation and loss calculation mechanism—strictly with the methodology established in MAZero \cite{MAZero}. The computational experiments are executed on high-performance computing clusters equipped with NVIDIA RTX A6000 GPUs.

Regarding the hyper-parameters for Monte Carlo Tree Search (MCTS), we tailor the search budget to the complexity of the environment. In the MatGame scenarios, we employ a simulation budget of 50, with a branching factor of 3 sampled actions per node. Conversely, for the more intricate SMAC and SMACv2 benchmarks, the simulation count is increased to 100, with 7 actions sampled during the expansion phase. A comprehensive list of additional hyper-parameters is provided in Table \ref{tab:hyperparams}.

\begin{table*}[t]
\centering
\caption{Detailed hyper-parameters for NonZero across MatGame, SMAC, and SMACv2 environments.}
\label{tab:hyperparams}
\vspace{0.1in}
\setlength{\tabcolsep}{10pt} 
\begin{tabular}{@{}lrlr@{}}
\toprule
\textbf{Hyper-Parameter} & \textbf{Value} & \textbf{Hyper-Parameter} & \textbf{Value} \\
\midrule
Optimizer & Adam & Minibatch size & 256 \\
Learning rate & \(10^{-4}\) & Discount factor & 0.99 \\
Weight decay & 0 & Priority exponent & 0.6 \\
RMSprop epsilon & \(10^{-5}\) & Priority correction & 0.4 \(\to\) 1 \\
Max gradient norm & 5 & Dynamic generation ratio & 0.6 \\
Evaluation episodes & 32 & \(\lambda\) for initialization & \(10^{-4}\) \\
Target network update interval & 200 & Quantile in MCTS value est. & 0.75 \\
Unroll steps & 5 & Decay \(\lambda\) in MCTS value est. & 0.8 \\
TD steps & 5 & Exp. factor in Weighted-Advantage & 3 \\
Min replay size for sampling & 300 & Stacked observations & 4 \\
\bottomrule
\end{tabular}
\end{table*}

\section{Details of Baseline Algorithms}
\label{appendix_C}

\paragraph{MAZero Family.}
We derive our implementations of MAZero \cite{MAZero} and its variant, MAZero-NP, from the official repository\footnote{\url{https://github.com/liuqh16/MAZero}}. The MAZero-NP variant is distinct in that it excludes prior information from the UCT bound calculation, while retaining the remainder of the architecture. Regarding the MCTS configuration, we employ 3 sampled actions and 50 simulations for MatGame tasks, whereas SMAC and SMACv2 experiments utilize 7 sampled actions and 100 simulations.
Additionally, we evaluate MA-AlphaZero, which adapts the MAZero codebase to incorporate the scoring mechanism of AlphaZero \cite{alphazero}. Specifically, the UCT selection criterion is modified to utilize raw Q-values rather than advantage scores. Given the structural similarities to MAZero, it shares the hyperparameter settings enumerated in Table \ref{tab:mazero_params}.

\paragraph{Model-Free Baselines.}
The QMIX \cite{QMIXmixmab} baseline is established using the PyMARL framework\footnote{\url{https://github.com/oxwhirl/pymarl}}, with configuration details provided in Table \ref{tab:baselines_params} (Left). Similarly, MAPPO \cite{mappo} utilizes the official on-policy benchmark implementation\footnote{\url{https://github.com/marlbenchmark/on-policy}}, following the specifications detailed in Table \ref{tab:baselines_params} (Right).

\begin{table*}[!t]
\centering
\caption{Hyper-parameters for MAZero, MAZero-NP, and MA-AlphaZero across MatGame, SMAC, and SMACv2 environments.}
\label{tab:mazero_params}
\vspace{0.1in}
\setlength{\tabcolsep}{12pt}
\begin{tabular}{@{}lrlr@{}}
\toprule
\textbf{Hyper-Parameter} & \textbf{Value} & \textbf{Hyper-Parameter} & \textbf{Value} \\
\midrule
Optimizer & Adam & Minibatch size & 256 \\
Learning rate & \(10^{-4}\) & Discount factor & 0.99 \\
RMSprop epsilon & \(10^{-5}\) & Priority exponent & 0.6 \\
Weight decay & 0 & Priority correction & 0.4 \(\to\) 1 \\
Max gradient norm & 5 & Quantile in MCTS value est. & 0.75 \\
Evaluation episodes & 32 & Decay \(\lambda\) in MCTS value est. & 0.8 \\
Target network update interval & 200 & Exp. factor in Weighted-Advantage & 3 \\
Unroll steps & 5 & Stacked observations & 4 \\
TD steps & 5 & Min replay size for sampling & 300 \\
\bottomrule
\end{tabular}
\end{table*}

\begin{table*}[!h]
\centering
\caption{Hyper-parameters for model-free baselines.}
\label{tab:baselines_params}
\vspace{0.1in}
\begin{minipage}{0.48\textwidth}
    \centering
    \subcaption{QMIX Configuration}
    \setlength{\tabcolsep}{4pt}
    \begin{tabular}{@{}lr@{}}
    \toprule
    \textbf{Hyper-Parameter} & \textbf{Value} \\
    \midrule
    Optimizer & RMSProp \\
    Learning rate (Actor) & \(5 \times 10^{-4}\) \\
    Learning rate (Critic) & \(5 \times 10^{-4}\) \\
    Initial \(\epsilon\) & 1.0 \\
    Final \(\epsilon\) & 0.05 \\
    Batch size & 32 \\
    Buffer size & 5000 \\
    Discount factor & 0.99 \\
    Exploration noise & 0.1 \\
    \bottomrule
    \end{tabular}
\end{minipage}
\hfill
\begin{minipage}{0.48\textwidth}
    \centering
    \subcaption{MAPPO Configuration}
    \setlength{\tabcolsep}{4pt}
    \begin{tabular}{@{}lr@{}}
    \toprule
    \textbf{Hyper-Parameter} & \textbf{Value} \\
    \midrule
    Optimizer & Adam \\
    Learning rate & \(5 \times 10^{-4}\) \\
    RMSprop epsilon & \(10^{-5}\) \\
    Recurrent chunk length & 10 \\
    Gradient clipping & 10 \\
    GAE parameter & 0.95 \\
    Discount factor & 0.99 \\
    Value loss & Huber ($\delta=10$) \\
    Batch size & Buffer \(\times\) Agents \\
    \bottomrule
    \end{tabular}
\end{minipage}
\end{table*}

\section{Settings of Benchmarks}
\label{appendix_D}
\paragraph{MatGame Environments.}
We evaluate the performance of NonZero and baseline algorithms on the MatGame benchmark under two distinct configurations. 
(1) \textit{Linear Setting}: The joint reward is defined strictly as the summation of the indices of all agents within the system. 
(2) \textit{Non-linear Setting}: We introduce stochastic perturbations to the linear reward structure. Specifically, the final joint reward represents the linear sum superimposed with a noise term composed of a Gaussian component \(u\sim\mathcal{N}(0,2^2)\) and a uniform component \(v \sim \mathcal{U}(-3, 3)\).

\paragraph{StarCraft Multi-Agent Challenge (SMAC).}
Our experiments on the standard SMAC benchmark utilize the official implementation hosted at \url{https://github.com/oxwhirl/smac}. To ensure a comprehensive evaluation, we select three representative maps covering small, medium, and large scales of agent counts. All experimental results are reported as averages over 3 independent random seeds to guarantee reproducibility and statistical significance.

\paragraph{SMACv2.}
For the more complex SMACv2 environments, we adopt the codebase from \url{https://github.com/oxwhirl/smacv2}. Unlike the original SMAC, this version emphasizes generalization. In our experiments, we enforce high stochasticity by randomizing both the start positions and the heterogeneous unit types for each episode, even within the same map. Furthermore, to enhance agent diversity and tactical complexity, the unit sight ranges and attack ranges are modified compared to the original SMAC settings.

\begin{table*}[!t]
\centering
\caption{Summary of experimental configurations across MatGame, SMAC, and SMACv2 benchmarks.}
\label{tab:benchmark_summary}
\vspace{0.1in}
\setlength{\tabcolsep}{10pt}
\begin{tabular}{@{}llll@{}}
\toprule
\textbf{Benchmark} & \textbf{Configuration / Map Type} & \textbf{Key Characteristics} & \textbf{Noise / Variation} \\
\midrule
\multirow{2}{*}{MatGame} & Linear & Sum of agent indices & None \\
 & Non-linear & Linear sum + Noise & \(u\sim\mathcal{N}(0,2^2) + v \sim \mathcal{U}(-3, 3)\) \\
\midrule
SMAC & Small, Medium, Large & Homogeneous/Fixed & 3 Random Seeds \\
\midrule
\multirow{2}{*}{SMACv2} & \multirow{2}{*}{General Maps} & \multirow{2}{*}{Heterogeneous Units} & Randomized Start Positions \\
 & & & Modified Sight/Attack Ranges \\
\bottomrule
\end{tabular}
\end{table*}

\end{document}